\documentclass[11pt,a4paper]{article}

\usepackage[top=37mm, bottom=33mm, left=27mm, right=27mm]{geometry}

\usepackage[T1]{fontenc}

\usepackage{amssymb,amsfonts,amsthm}
\usepackage{mathtools,bm}
\usepackage{xcolor}
\definecolor{ms-green}{HTML}{009a55}   
\definecolor{ms-blue}{HTML}{006fb8}   
\definecolor{ms-magenta}{HTML}{ec008d}   
\usepackage{adjustbox}
\usepackage{float}

\usepackage{titlesec}
\titleformat{\section}[hang]{\normalfont\scshape\centering}{\thesection.}{1em}{}
\titleformat{\subsection}[runin]
  {\normalfont\sffamily\normalsize\bfseries}{\thesubsection.}{0.5em}{\addperiod}
\titleformat{\subsubsection}[runin]
  {\normalfont\normalsize\itshape}{\thesubsubsection.}{0.5em}{\addperiod}
\newcommand{\addperiod}[1]{#1.}

\makeatletter
\def\@maketitle{%
  \newpage
  \null
  \vskip 2em%
  \begin{center}%
    {\normalfont\normalsize\bfseries\MakeUppercase{\@title} \par}%
    \vskip 1.5em%
    {\normalfont\normalsize
      \lineskip .5em%
      \begin{tabular}[t]{c}%
         \@author
      \end{tabular}\par}%
  \end{center}%
  \par
  \vskip 1.5em}
\makeatother

\usepackage{enumerate}

\numberwithin{equation}{section}

\usepackage[font={small}, textfont={sf}, labelfont={bf,sf}, margin=1cm]{caption}
\usepackage{algorithm}
\usepackage{algpseudocode}

\usepackage{natbib}
\usepackage{multirow}
\usepackage{booktabs} 
\usepackage{tabularx}

\usepackage[bookmarksopen=false, bookmarksnumbered=true, colorlinks=true, linkcolor=ms-blue, citecolor = ms-green, urlcolor = ms-magenta ]{hyperref}
\usepackage{cleveref}

\usepackage{orcidlink}

\hypersetup{
   pdftitle={Stiefel-AdamW: Geometry-Aware AdamW for Linear Factorization Blocks},
   pdfsubject={math.NA},
   pdfauthor={Emanuele Zangrando, Marco Sutti and Francesco Tudisco},
   pdfkeywords={keywords}
}

\renewenvironment{abstract}
 {\small
  \begin{center}
  \bfseries \abstractname\vspace{-.5em}\vspace{0pt}
  \end{center}
  \list{}{
    \setlength{\leftmargin}{0.0cm}%
    \setlength{\rightmargin}{\leftmargin}%
  }%
  \item\relax}
 {\endlist}

\let\Delta\varDelta
\let\Xi\varXi
\let\Pi\varPi
\let\Sigma\varSigma
\let\Omega\varOmega

\newcommand{\email}[1]{\protect\href{mailto:#1}{#1}}

\newcommand{\algname}{Stiefel-AdamW}

\newcommand{\tr}{^{\top}} 

\newcommand{\cM}{{\mathcal M}}          
\newcommand{\cMr}{\cM_{r}}              
\newcommand{\Stnr}{\mathrm{St}(n,r)}    

\newcommand{\T}{\mathrm{T}}

\newcommand{\R}{\mathbb{R}}
\newcommand{\Rmn}{\R^{m \times n}}

\newcommand{\cF}{{\mathcal F}}

\newcommand{\cL}{{\mathcal L}}
\newcommand{\cO}{{\mathcal O}}

\theoremstyle{plain}
\newtheorem{theorem}{Theorem}[section]
\newtheorem{proposition}[theorem]{Proposition}
\newtheorem{lemma}[theorem]{Lemma}

\theoremstyle{definition}
\newtheorem{definition}[theorem]{Definition}

\newtheorem{assumptions}[theorem]{Assumptions}
\crefname{assumptions}{Assumptions}{Assumptions}
\Crefname{assumptions}{Assumptions}{Assumptions}

\theoremstyle{remark}

\DeclareMathOperator{\Retraction}{Retr}

\DeclareMathOperator{\D}{D}   

\DeclareMathOperator{\rank}{rank}

\DeclareMathOperator{\Proj}{P}

\begin{document}

\title{\algname: Geometry-Aware AdamW for Linear Factorization Blocks}

\author{\MakeUppercase{Emanuele Zangrando}\thanks{Gran Sasso Science Institute, L'Aquila, Italy (\email{emanuele.zangrando@gssi.it}).}\hspace{2mm}\orcidlink{0000-0002-8410-1372}, \MakeUppercase{Marco Sutti}\thanks{Gran Sasso Science Institute, L'Aquila, Italy (\email{marco.sutti@gssi.it}). M. Sutti is a member of INdAM GNCS.}\hspace{2mm}\orcidlink{0000-0002-8410-1372} \MakeUppercase{and Francesco Tudisco$^*$}\thanks{University of Edinburgh, Edinburgh, UK (\email{f.tudisco@ed.ac.uk}).}\hspace{2mm}\orcidlink{}}


\maketitle

\begin{abstract}
A pervasive structural pattern in modern deep learning is the \emph{linear factorization block}: a submodule of the form $W = BA$ in which two parameter matrices are multiplied directly, with no intervening nonlinearity. Such blocks appear in LoRA adapters, low-rank compressed layers, query-key products of self-attention, and share a common pathology: the factorization is non-unique, which can destabilize training and limit usable learning rates. Despite this, factorization blocks are typically optimized with standard Euclidean methods that ignore the underlying geometry.
We introduce \algname{}, a near drop-in replacement for AdamW for use wherever such blocks appear. By constraining one factor on the Stiefel manifold while leaving the other Euclidean, \algname{} relaxes the full $\mathrm{GL}(\R^r)$ gauge symmetry to a compact orthogonal symmetry, ruling out factor blow-up while retaining the coordinate-wise diagonal preconditioning that gives AdamW its practical strength. Moment estimation is performed in the ambient Euclidean space, with geometry entering only through a tangent-space projection and a manifold retraction. The implementation overhead over AdamW is minimal, and we show that the resulting optimizer inherits both the stability benefits of Riemannian methods and standard convergence guarantees. We validate \algname{} on LoRA-style fine-tuning of GPT2, ViT, and Mistral 7B and on full pretraining of GPT2 on OpenWebText, showing consistent improvements over strong baselines at essentially no additional cost over AdamW.
\end{abstract}

\section{Introduction}\label{sec:introduction}
A pervasive structural pattern in modern deep-learning architectures is the presence of \emph{linear factorization blocks}: trainable submodules of the form $W = BA$, in which two parameter matrices are multiplied directly, with no intervening nonlinearity. Such blocks arise in many guises across the modern model zoo. In LoRA-style parameter-efficient fine-tuning \cite{hu2022lora,zhang2023adaptive,Hayou2024loraplus,ZhangPilanci2024,zhao2024galore,lialin2024relora,schotthofer2025geolora}, weight correctors are parametrized as $\Delta W = BA$, with $B \in \R^{m\times r}$ and $A \in \R^{r \times n}$. In low-rank pretraining \cite{MLSYS2021_94cb2887,khodak2021initialization,schotthoefer2025} and network compression \cite{Vogels:2019,saha2023matrix,mo2025parameter,Schotthoefer2022lottery}, weight matrices are similarly factorized into two trainable factors. The same structure also appears \emph{inside} standard architectures: in self-attention \cite{NIPS2017_3f5ee243}, the score matrix $W_Q W_K\tr$ factorizes through the rank-$r$ key/query head dimension, and analogous matrix-matrix factorizations appear in the recurrence kernels of several structured state-space models (SSMs). In all these cases, two parameter blocks combine multiplicatively, without an intervening nonlinearity, to produce a single effective linear operator.

These blocks share a common geometric feature: the product map $\Phi(A,B) = BA$ is highly non-unique. For any $M \in \mathrm{GL}(\R^r)$, $\Phi(A,B) = \Phi(M^{-1}A,\,BM)$, so each effective weight $W$ corresponds to a continuous family of parameter pairs. This gauge symmetry has direct consequences for optimization. If the loss $\cL$, viewed as a function of the effective matrix $W$, has a stationary point, then $\cL \circ \Phi$ has an entire orbit of equivalent stationary points parametrized by $\mathrm{GL}(\R^r)$. More worryingly, training along these blocks can be numerically unstable even without spurious minimizers: a sequence $(M_n^{-1} A_n,\, B_n M_n)$ with $M_n \to 0$ produces a bounded product $W_n$ even though one factor diverges. These issues are well documented in practice \cite{Mishra2014,Schotthoefer2022lottery,zangrando2024geometry} and become especially pronounced under unbalanced initializations, such as those typical of LoRA.

Despite the prevalence of this structure, standard practice for training linear factorization blocks is to apply Euclidean optimizers, most commonly AdamW \cite{Loshchilov:2019}, directly to the unconstrained pair $(A,B)$, ignoring the underlying geometry entirely. 

The natural geometric remedy is to optimize on the fixed-rank matrix manifold 
\[
    \cMr = \R^{r \times n}_* \times \R^{m \times r}_* / \mathrm{GL}(\R^r),
\]
viewed as a quotient \cite{Mishra2014}. Here, $\R^{m \times r}_*$ denotes the set of full-rank matrices of size $m\times r$. While geometrically clean, this approach is at odds with adaptive optimizers such as Adam, since the full $\mathrm{GL}(\R^r)$ invariance forces nonlinear adaptive updates to satisfy strong equivariance constraints that, in general, cannot be reconciled with the coordinate-wise diagonal preconditioning that gives AdamW much of its practical strength. As a result, prior Riemannian adaptive methods either give up coordinate-wise adaptivity in favor of scalar preconditioners, or modify the gradient structure rather than the preconditioner itself \cite{BecigneulGanea:2019,sakai2025a,schotthoefer2025,
bian2025finding}.

In this work, we adopt a different trade-off. Rather than enforcing full quotient invariance, we relax it to an orthogonal symmetry by restricting to the product manifold $\Stnr \times \R^{m \times r}_*$, on which $\Phi$ is
invariant only under $\mathrm{O}(r)$:
\[
    \Phi(A,B) = \Phi(Q\tr\! A, \,BQ), \quad \forall Q \in \mathrm{O}(r).
\]
This relaxation accomplishes two things at once. \emph{Geometrically}, it makes the fibers of $\Phi$ compact, which rules out the kind of factor blow-up illustrated above and stabilizes training under aggressive learning rates. \emph{Algorithmically}, because the constrained factor lives on an embedded submanifold and the other factor in a flat Euclidean space, we can perform all moment accumulation in the ambient space and only project onto the tangent space of the Stiefel manifold immediately before the update. This preserves AdamW's coordinate-wise diagonal preconditioning essentially unchanged.

The resulting algorithm, \textbf{\algname}, is best understood as a small, structural modification of AdamW. The Euclidean factor is updated by the standard AdamW step; the Stiefel factor is updated by computing the first and second moments in the ambient space, projecting the preconditioned direction onto the tangent space, and retracting back to the manifold. The retraction step is treated as a modular component: any efficient retraction on the Stiefel manifold can be used, including the Cayley transform, QR-based retraction, polar decomposition, or Newton--Schulz iteration \cite{Kovarik1970SomeIM,BjorckBowie1971,Higham:2008}. In our experiments, the choice of retraction has only a marginal effect on final performance, with the best option mildly model- and problem-dependent; see~\Cref{app:compare_retractions} and \Cref{tab:gpt_pretrain} for a comparison. The overall implementation overhead over standard AdamW is minimal, yet the resulting optimizer inherits the principal benefits of geometric optimization: numerical stability, parameter invariance to orthogonal reparametrization, low memory footprint, and provable convergence under standard assumptions.

We emphasize that \algname{} is not a method specific to LoRA or to any particular architectural family. It is a simple, near drop-in replacement for AdamW that can (and should) be used on \emph{any} factorization block of the form $W = BA$ in which the two factors are not separated by a nonlinearity, regardless of where such a block appears in the model. This includes LoRA adapters, low-rank compressed layers, and query/key projections in attention blocks.
In all these cases, applying \algname{} rather than vanilla AdamW replaces a Euclidean parametrization with hidden gauge symmetry by one in which (most of) the symmetry has been quotiented out, and does so essentially for free.

\medskip

\textbf{Contributions.} Our contributions are as follows.
\begin{enumerate}
    \item We identify factorization blocks $W = BA$ ubiquitous in modern deep-learning architectures (LoRA, low-rank compressed layers, attention, SSMs) as a unifying setting in which geometric optimization can be applied transparently, and we propose \algname{}, a near drop-in replacement for AdamW that should be used wherever such blocks appear.

    \item We propose a structured relaxation of the full $\mathrm{GL}(\R^r)$ quotient invariance to orthogonal invariance, working on the product manifold $\Stnr \times \R^{m \times r}_*$. This is precisely the relaxation that allows full coordinate-wise Adam-style adaptive preconditioning to coexist with geometry-aware updates, in contrast to prior Riemannian Adam-type methods that rely on simple scalar preconditioners.

    \item We design the algorithm so that all adaptive moment estimation is performed in ambient Euclidean space, with geometric corrections (tangent projection and retraction) applied only at the update step. As a consequence, the per-step cost and memory footprint are essentially those of AdamW.

    \item We treat the Stiefel retraction as a modular, plug-and-play component: any efficient retraction (Cayley transform, QR, polar decomposition, Newton--Schulz) can be used, with only marginal differences in practice. Approximate retractions, when second-order accurate, remain compatible with the standard convergence theory.

    \item We establish theoretical guarantees of boundedness of gradients and convergence of the regret function under standard assumptions.

    \item We validate \algname{} empirically across a range of representative tasks, from LoRA-style fine-tuning (GPT2, ViT, Mistral 7B) to full LLM pretraining (GPT2 on OpenWebText), demonstrating that the additional cost over AdamW is minimal while consistently improving stability and final performance.
\end{enumerate}

\section{Related Work}
\textbf{Riemannian Optimization.}
Riemannian gradient methods on embedded and quotient manifolds, including the Stiefel and Grassmann manifolds, have a long history in numerical optimization \cite{Luenberger:1972,Gabay:1976,EAS:1998,Yang2007}, and have since been extended to trust-region \cite{ABG:2007}, quasi-Newton \cite{RingWirth:2012}, and conjugate-gradient \cite{SatoIwai:2015,Sato:2016,Sato:2022} methods. Key tools including retraction mappings \cite{Absil:2012,AbsilOseledets2015}, efficient preconditioners \cite{Vandereycken:2010,BOUMAL2015200}, and quotient geometries \cite{Mishra2014} have been thoroughly developed; comprehensive treatments can be found in \cite{AMS:2008,Sato:2021,boumal_2023}.

\textbf{Adaptive Methods and their Riemannian Extensions.}
In deep learning, adaptive optimizers such as AdaGrad \cite{Duchi:2011}, RMSProp \cite{Hinton:2012}, Adam \cite{kingma2015adam}, AMSGrad \cite{convergence_adam_beyond}, and AdamW \cite{Loshchilov:2019}, are the de facto standard, combining fast convergence, robustness, and low memory overhead. Extending them to manifold-constrained settings is nontrivial, because adaptive moments and preconditioning must respect the underlying geometry. Stochastic Riemannian optimization began with Riemannian SGD \cite{Bonnabel:2013} and variance-reduced variants \cite{NIPS2016_98e6f172,pmlr-v80-kasai18a}. More recent work has proposed Riemannian AdaGrad and AMSGrad \cite{BecigneulGanea:2019}, modified AMSGrad schemes \cite{SakaiIiduka:2021}, RASA \cite{kasai2019riemannian}, and Riemannian adaptive gradient methods with theoretical guarantees \cite{bian2025finding,sakai2025a}. A common limitation of these approaches is that they rely on scalar or geometry-compatible preconditioners; the fully coordinate-wise diagonal preconditioning central to AdamW is generally incompatible with strict manifold invariance, which is the gap \algname{} is designed to close.

\medskip

\textbf{Linear Factorization Blocks in Deep Learning.}
Matrix factorization blocks of the form $W = BA$, in which two parameter matrices multiply directly without an intervening nonlinearity, appear throughout modern architectures. They arise explicitly in LoRA-style parameter-efficient fine-tuning \cite{hu2022lora} and its variants \cite{Hayou2024loraplus,ZhangPilanci2024,zhu2024imbalanceregularized,Wang2025lorapro,schotthofer2025geolora}, in network compression \cite{Vogels:2019,saha2023matrix,Schotthoefer2022lottery}, and implicitly inside standard architectures such as the query-key product in self-attention \cite{NIPS2017_3f5ee243}. The non-uniqueness of such factorizations and its consequences for training stability have been studied through dynamical low-rank approximation \cite{KochLubich:2007,HnatiukKusch:2026,zangrando2024geometry} and quotient-manifold optimization \cite{Mishra2014}. Methods such as GeoLoRA \cite{schotthofer2025geolora} have explicitly exploited Grassmannian geometry in low-rank adaptation, and recent work has combined momentum and adaptivity with Riemannian updates \cite{schotthoefer2025,sakai2025a}.

\medskip

\textbf{Relation to Prior Work.} 
The closest methods to \algname{} are GeoLoRA \cite{schotthofer2025geolora} and RAdam \cite{BecigneulGanea:2019} or its Stiefel-specific version, Cayley Adam \cite{LFT:2020}. GeoLoRA enforces geometry on the full factorization via a quotient-manifold formulation, which precludes coordinate-wise adaptive preconditioning. RAdam defines a Riemannian Adam on the Stiefel manifold but uses a scalar preconditioner, departing from AdamW's diagonal adaptivity. In contrast, \algname{} constrains only one factor to the Stiefel manifold while leaving the other in Euclidean space. This product-manifold structure is precisely what makes full coordinate-wise adaptive preconditioning tractable: moment accumulation is performed in the ambient space for both factors, and geometry enters only through one factor via a tangent-space projection followed by a retraction on the Stiefel manifold. The result is an optimizer that inherits the stability benefits of Riemannian methods and the practical performance of AdamW, at minimal additional cost.

\section{The Proposed Method: \algname}
\subsection{Problem Setup}

We consider a trainable linear factorization block of the form $W = BA$,
where $B \in \R^{m \times r}$ and $A \in \R^{r \times n}$. In order to avoid potential instabilities due to non-uniqueness of this representation, we impose a row-orthonormality constraint on $A$, namely $ AA\tr = I_r $, i.e., we require $A\tr$ to lie on the Stiefel manifold $ \Stnr = \{ X \in \R^{n \times r} \colon X\tr\! X = I_r \},$ while leaving $B$ unconstrained. This restriction reduces the invariance group from $\mathrm{GL}(\R^r)$ to the compact group $\mathrm{O}(r)$, making the fibers of $\Phi$ compact and ruling out the factor blow-up illustrated in \Cref{sec:introduction}; see also \Cref{sec:boundedness} for further details. The resulting product-manifold structure $\Stnr \times \R^{m \times r}$ enables an efficient fully coordinate-wise adaptive update: moment accumulation is performed in the ambient Euclidean space for both factors, with geometric corrections entering only through a tangent-space projection and a retraction onto the Stiefel manifold for the $A$ factor.

\subsection{Description of the Algorithm}
In this section, we describe one iteration of \algname; the pseudocode is given in \Cref{algo:StiefelAdamW_BA}. For simplicity of exposition, we omit bias correction and explicit weight decay; both can be incorporated straightforwardly as in AdamW.

\begin{algorithm}[t]
    \caption{Single iteration of \algname.}\label{algo:StiefelAdamW_BA}
    \begin{algorithmic}[1]
        \Require $A_{t}$ with $A_{t}A_{t}\tr = I_{r}$,\ $B_{t}$,\ 
                 $M^{A}_{t-1}$,\ $M^{B}_{t-1}$,\ $V^{A}_{t-1}$,\ $V^{B}_{t-1}$,\ 
                 $\eta_{t}$,\ $\beta_{1}$,\ $\beta_{2}$,\ $\varepsilon$
        \Statex
        \State $G_{t}^{B} \leftarrow \nabla_{B}\cL(B_{t}A_{t})\,A_{t}\tr$
               \Comment{Euclidean gradient w.r.t.\ $B$}
        \State $G_{t}^{A} \leftarrow B_{t}\tr\nabla_{A}\cL(B_{t}A_{t})$
               \Comment{Euclidean gradient w.r.t.\ $A$}
        \Statex
        \State $M^{B}_{t} \leftarrow \beta_{1}M^{B}_{t-1} + (1-\beta_{1})\,G_{t}^{B}$
               \Comment{First moment, $B$}
        \State $M^{A}_{t} \leftarrow \beta_{1}M^{A}_{t-1} + (1-\beta_{1})\,G_{t}^{A}$
               \Comment{First moment, $A$}
        \State $V^{B}_{t} \leftarrow \beta_{2}V^{B}_{t-1} + (1-\beta_{2})\,(G_{t}^{B})^{\circ 2}$
               \Comment{Second moment, $B$}
        \State $V^{A}_{t} \leftarrow \beta_{2}V^{A}_{t-1} + (1-\beta_{2})\,(G_{t}^{A})^{\circ 2}$
               \Comment{Second moment, $A$}
        \State $B_{t+1} \leftarrow B_{t} - \eta_{t} \Bigl( M^{B}_{t}/\!\left(\sqrt{V^{B}_{t}+\varepsilon}\right) + \lambda B_{t} \Bigr) $
               \Comment{Standard AdamW step on $B$}
        \State $X_{t} \leftarrow A_{t}\tr$
               \Comment{Column convention, $X_t \in \Stnr$}
        \State $D_{t} \leftarrow M^{A}_{t}/(\sqrt{V^{A}_{t}+\varepsilon})$
               \Comment{Preconditioned direction}
        \State $\xi_{t} \leftarrow {-\eta_{t}}\,\Proj_{X_{t}}(D_{t})$
               \Comment{Project onto $\mathrm{T}_{X_{t}}\Stnr$,\ 
                        $\Proj_{X}(Z)=X\,\mathrm{skew}(X\tr Z)+(I-XX\tr)Z$}
        \State $X_{t+1} \leftarrow \Retraction_{X_{t}}(\xi_{t}),\,A_{t+1} \leftarrow X_{t+1}\tr$
               \Comment{Retract to $\Stnr$; see \Cref{sec:retractions}}
    \end{algorithmic}
\end{algorithm}

Let $ W_{t} \in \R^{m \times n} $ be a weight matrix at iteration $ t $, represented by the factorization $ W_{t} = B_{t} A_{t} $, where $ A_{t} \in \R^{r \times n} $ and $ B_{t} \in \R^{m \times r} $. The objective function is evaluated as $ \cL(B_{t}A_{t}) $. Next, the algorithm computes the Euclidean gradients $ G_{t}^B = \nabla_{B} \cL(B_{t}A_{t}) $ and $ G_{t}^A = \nabla_{A} \cL(B_{t}A_{t}) $. As in Adam and AdamW, \algname{}~then computes the first and second moments for both factors, i.e.,
\[
   \begin{cases}
        M^{B}_{t} = \beta_{1} M^{B}_{t-1} + (1-\beta_{1}) \, G_{t}^{B}, \\
        M^{A}_{t} = \beta_{1} M^{A}_{t-1} + (1-\beta_{1}) \, G_{t}^{A},
   \end{cases} \qquad 
   \begin{cases}
        V^{B}_{t} = \beta_{2} V^{B}_{t-1} + (1-\beta_{2}) \, (G_{t}^{B})^{\circ 2}, \\
        V^{A}_{t} = \beta_{2} V^{A}_{t-1} + (1-\beta_{2}) \, (G_{t}^{A})^{\circ 2},
   \end{cases}
\]
where $ ^{\circ 2} $ denotes elementwise squaring.

Up to this step, both factors are treated in the same way; however, the subsequent update steps do differ. Indeed, since $ B_{t} $ is unconstrained, the algorithm performs the usual AdamW update, i.e., $B_{t+1} = B_{t} - \eta_{t} \Bigl( M^{B}_{t} / ( \sqrt{V^{B}_{t} + \varepsilon} ) + \lambda B_{t} \Bigr)$, where $ \eta_{t} $ is the learning rate, $ / $ indicates elementwise division, the square root is also meant to be performed componentwise, and the $ \varepsilon >0$ is a small constant to avoid blowup of the metric. This factor requires no Riemannian machinery, which keeps the method simple and efficient.

For the orthonormal factor, we employ a retraction-based Riemannian update. For convenience, we switch to a column-orthonormal representation by defining $X_{t} \coloneqq A_{t}\tr \in \R^{n \times r},$ so that $X_{t} \in \Stnr$, i.e., $X_{t}\tr\! X_{t} = I_{r}$.
A key feature of \algname{} is that it performs adaptive moment estimation in the ambient Euclidean space before projecting onto the tangent space. This allows us to use coordinate-wise preconditioning, as in AdamW. In contrast, many existing Riemannian adaptive methods restrict the preconditioner to be scalar or geometry-compatible to preserve invariance, thereby limiting their practical effectiveness.
More precisely, the Euclidean adaptive direction is first formed as $D_{t} = M^{A}_{t} /(\sqrt{V^{A}_{t} + \varepsilon})$, and then projected onto the tangent space $ \T_{X_{t}}\Stnr $ to obtain the direction $\xi_{t} = -\eta_{t} \Proj_{X_{t}}(D_{t})$, where $\Proj_{X_{t}} \colon \R^{n \times r} \to \T_{X_{t}}\Stnr$ is the orthogonal projection onto the tangent space to $\Stnr$ at $X_{t}$, $\Proj_{X}(Z) = X \mathrm{skew}(X\tr \! Z) + (I-XX\tr) \, Z $, with $\mathrm{skew}(M)=(M-M\tr)/2$. See~\Cref{app:stiefel_geometry} for more details on the geometry of the Stiefel manifold.

\subsection{Choice of Retraction}\label{sec:retractions}

To map a tangent vector $\xi_{t}$ back onto the manifold, we need to apply a retraction mapping, $X_{t+1} = \Retraction_{X_{t}}(\xi_{t})$.
While the exponential map provides the most geometrically accurate geodesic path, it is often computationally prohibitive for large-scale problems because it requires full eigenvalue decompositions or matrix exponentials \citep{AMS:2008}. In practice, a retraction is any mapping that agrees to first order with the exponential map (i.e., is centered at the point and has the differential at the origin equal to the identity map). 

For the Stiefel manifold $\Stnr$, several efficient retractions exist with a computational complexity of $O(nr^2 + r^3)$, which is ideal for settings where $r \ll n$:
\begin{itemize}
    \item \textbf{QR Decomposition:} A standard choice that performs a $QR$ factorization of $X_t + D_t$ and extracts the orthogonal factor $Q$ \citep[(4.8)]{AMS:2008}.
    \item \textbf{Polar Decomposition:} Maps the tangent vector to the manifold by finding the closest orthogonal matrix in the Frobenius norm, typically implemented via iterative Newton--Schulz methods \citep{ZhuSato2020}.
    \item \textbf{Cayley Transform:} An algebraic alternative using a skew-symmetric mapping. When implemented with the Sherman--Morrison--Woodbury (SMW) identity, it avoids large matrix inversions, reducing the cost to a $2r \times 2r$ system \citep[\S 2.2]{WenYin2013}. It can also be computed implicitly via the fixed-point iteration $ Y_{k+1} = X + \frac{\alpha}{2} \Omega \bigl( X + Y_k \bigr) $, which converges quadratically as $o(\alpha^{2+k})$.
\end{itemize}

In \Cref{sec:retractions} and in the right part of \Cref{tab:gpt_pretrain}, we present numerical results comparing different kinds of retractions. In the remaining numerical experiments, we use the Cayley retraction as the standard choice, approximated via fixed-point iteration, because of its simplicity of implementation and good performance in the comparison tests. Moreover, our analysis shows that the framework is robust to \textit{approximate} retractions: as established in \Cref{thm:convex_regretbound}, the introduction of a maximal Frobenius error $\delta$ in the retraction mapping merely adds a manageable linear term to the regret bound. This theoretical guarantee justifies using truncated or iterative retraction methods that can run for only a few iterations without reaching machine precision, while offering significant speedups.

\section{Theoretical Guarantees}

\subsection{Regret Analysis}
In this section, we present a convex regret analysis for \Cref{algo:StiefelAdamW_BA}. Regret analysis is a standard tool in convex optimization that quantifies how much an optimization algorithm, when running dynamically on a family of convex objective functions $\cL_t$, is suboptimal with respect to the optimal objective ahead of time. In particular, given a sequence of iterates $\{W_t\}_{t=1,\dots,T}$, we define the regret function as
\[
R(T) \coloneqq \sum_{t=1}^T \cL_t(W_t) - \min_{W} \sum_{t=1}^T \cL_t(W).
\]
We recall that an algorithm is said to be zero regret if $R(T)/T \to 0$ as $T \to +\infty$. Our aim is to show that \Cref{algo:StiefelAdamW_BA} indeed produces arbitrarily small regret for a small enough learning rate and retraction error.
To prove this result, we will make the following assumptions:
\begin{assumptions}[Setting of regret analysis]\label[assumptions]{assumptions:regret}
    \hfill
    \begin{enumerate}
    \item[(H1)] The family $\cL_t \colon \R^{m \times r} \times \Stnr \to \R$ is the restriction of a family of Euclidean strictly convex functions defined on $\R^{r \times m} \times \R^{r \times n}$. By a small abuse of notation, we will also denote the extension family with $\cL_t$, and we will denote the minimizer with $(B^*, A^*)$.
    \item[(H2)] The first momentum coefficients $\beta_{1,t} = \beta_1 b^t$ decrease geometrically in time for a constant $0<b<1$, and with $\beta_1< \sqrt{\beta_2}$.
    \item[(H3)] The iterates $B_t$ stay bounded, i.e., $\sup_{t}\|B_{t} \|_{\max} \leq D_{\infty}$.
    \item[(H4)] The Euclidean gradient $\nabla \cL_t(B_t,A_t)$ stays bounded, i.e., $ \sup_{t}\|\nabla \cL_t(B_{t},A_t) \|_{\max} \leq G_{\infty} $.
    \item[(H5)]\label{hp:adamax_hp} The second momentum update in \Cref{algo:StiefelAdamW_BA} is followed by an entrywise maximum, i.e., $V_{t+1} = \max(\beta_2 V_{t-1} + (1-\beta_2)\, G_t^{2}, \ V_{t-1}),$
    as in AMSGrad \cite[Algorithm 2]{convergence_adam_beyond}.
    \item[(H6)] The projected direction is aligned with the globally correct direction, $ \langle \xi_t, A^*-A_t \rangle \geq 0 $.
\end{enumerate}
\end{assumptions}
We emphasize that \Cref{assumptions:regret} are fairly standard assumptions used to study convergence of Adam-like algorithms, and they were already employed in, e.g., \cite{convergence_adam_beyond}. Assumption (H6) is a hypothesis often used in Euclidean optimizers to ensure that the current local descent direction is aligned with the global direction to the minimizer.

\begin{theorem}(Regret bound)\label{thm:convex_regretbound}
    Under~\Cref{assumptions:regret}, consider the sequence of iterates produced by \Cref{algo:StiefelAdamW_BA} with decreasing learning rates $\eta_t = \eta/\sqrt{t}$, and no weight decay. Then,
    \begin{align}\label{eq:bound_regret_thm}
        R(T) &\leq   C_1 + C_2\sqrt{T} +C_3 \sqrt{1+\log T} + 
        C_4 \log T + C_5T^{-1/2},
    \end{align}
    where $C_1,C_2,C_3,C_4,C_5$ are constants independent of $T$. In particular,   $\lim_{T \to + \infty}R(T)/T =0$.
\end{theorem}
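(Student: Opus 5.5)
The plan is to follow the AMSGrad regret analysis of \cite{convergence_adam_beyond} (their Theorem 4), treating the two blocks separately, and then bound the extra terms from the tangent projection and the (possibly approximate) retraction.

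\emph{Convexity.} By (H1), $\cL_t$ extends to a convex function on the ambient space. Hence
\[
R(T)\le \sum_{t=1}^T \langle G_t^B, B_t-B^*\rangle + \sum_{t=1}^T\langle G_t^A, A_t-A^*\rangle .
\]
This splits the regret into a Euclidean part for $B$ and a Stiefel part for $A$.

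\emph{The $B$ block.} This part is literally AMSGrad, with no weight decay and with (H5) in force. Expand $\|\sqrt[4]{\hat V^B_t}\,(B_{t+1}-B^*)\|^2$ from the update rule and solve for $\langle G_t^B,B_t-B^*\rangle$. Then:
\begin{itemize}
\item use $\beta_{1,t}\le\beta_1$;
\item use monotonicity of $\hat V_t/\eta_t$, which holds thanks to the max in (H5);
\item use (H3) to bound $\|B_t-B^*\|_{\max}$ by a constant.
\end{itemize}
This reproduces the classical bound. The telescoping term gives $C\sqrt T$. The geometric decay (H2) of $\beta_{1,t}$ gives a $T$-independent constant. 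The moment sum $\sum_t \eta_t\|M_t/\sqrt{\hat V_t}\|^2$ is bounded using $\beta_1<\sqrt{\beta_2}$, (H4) and $\sum_t 1/\sqrt t$. Following \cite{convergence_adam_beyond}, this yields $C\sqrt{1+\log T}$ if one keeps the $\|g_{1:T,i}\|$ form, or $C\sqrt T$ after crudely bounding by $G_\infty$.

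\emph{The $A$ block.} Here $A_t$ always lies on the Stiefel manifold, so $\|A_t\|_{\max}\le 1$ and $\|A_t-A^*\|_F\le 2\sqrt r$. This plays the role of (H3) for free. Write the update as
\[
A_{t+1}\tr = X_t+\xi_t+E_t,
\]
where $E_t$ collects the retraction error. For a second-order (or exact) retraction on a compact set, $\|E_t\|_F\le c\|\xi_t\|_F^2$. For an approximate retraction, we add the maximal Frobenius error $\delta$ stated in the text. The projection $\Proj_{X_t}$ is an orthogonal projection, so $\|\xi_t\|_F\le \eta_t\|D_t\|_F$, and $D_t$ is entrywise bounded by (H4), (H5) and $\beta_1<\sqrt{\beta_2}$.

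The key identity splits $D_t=\Proj_{X_t}(D_t)+(I-\Proj_{X_t})D_t$. The tangent component is handled exactly as in the Euclidean proof, through the expansion of $\|X_{t+1}-A^{*\top}\|^2$ in the preconditioned norm. The normal component enters only through the inner product with $A_t-A^*$, and it has the favorable sign by (H6): since $\langle\xi_t,A^*-A_t\rangle\ge0$, the cross term can be dropped or bounded. The retraction error contributes terms of the form $\langle E_t,\cdot\rangle/\eta_t$ and $\|E_t\|^2/\eta_t$. With $\|E_t\|\lesssim\eta_t^2$, these sum to $\sum_t\eta_t = O(\sqrt T)$ and $\sum_t \eta_t^3 = O(1)$. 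An $O(\eta_t^2)$ term summed with weight $1/\eta_t$ could also produce a $\log T$ term $C_4\log T$. The constant $\delta$ produces the term linear in $\delta$ (absorbed into the constants), and initial or boundary corrections produce $C_5T^{-1/2}$.

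\emph{Main obstacle.} The main difficulty is the mismatch between the preconditioned norm $\|\hat V_t^{1/4}\cdot\|$ used in the telescoping argument and the Frobenius geometry in which projection and retraction are nonexpansive. The projection is orthogonal in the Frobenius inner product, not in the $\hat V_t$-weighted one. I would handle this by working in the plain Frobenius norm for the $A$ block. The diagonal weights are bounded above and below, between $\sqrt\varepsilon$ and $G_\infty+\sqrt\varepsilon$, so the cost is only constant factors, and (H6) is used precisely to absorb the non-tangent part. Collecting all contributions gives \eqref{eq:bound_regret_thm}, and dividing by $T$ gives $R(T)/T\to0$.
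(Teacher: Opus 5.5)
\textbf{Genuine gap: the normal component of the $A$-gradient.} The failure is in the $A$ block, at the claim that the normal component ``has the favorable sign by (H6).'' The quantity you must bound is $\langle G_t^A, A_t-A^*\rangle$, or $\langle M_t^A, A_t-A^*\rangle$ after the momentum split. It contains the normal part $\mathrm{sym}(G_t^A A_t^{\top})A_t$ of the gradient, and the projected update does not control this part.

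Hypothesis (H6) is equivalent to $\bigl\langle \Proj_{A_t}\bigl((H_t^A)^{-1/2}M_t^A\bigr), A_t-A^*\bigr\rangle\ge 0$. So it concerns only the tangent step, and it is a lower bound, whereas the regret needs upper bounds. For row-orthonormal $A,A'$ and symmetric $S$ one has
$\langle SA, A-A'\rangle=\tfrac12\mathrm{tr}\bigl(S(A-A')(A-A')^{\top}\bigr)$.
Hence the normal contribution is of order one per step, carries the sign of $\mathrm{sym}(G_t^AA_t^{\top})$, and does not shrink with $\eta_t$. It can therefore add $\Theta(T)$ to the regret.

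This cannot be repaired within (H1)--(H6). Take $n=2$, $r=1$, $\cL_t(B,A)=\|B\|_{\mathrm{F}}^2+A_{11}+\mu\|A\|_{\mathrm{F}}^2$ with $\mu>0$, and start from $B_1=0$, $A_1=e_1^{\top}$.
\begin{itemize}
\item $G_t^B=0$.
\item $G_t^A$, $M_t^A$ and $(H_t^A)^{-1/2}M_t^A$ are all multiples of $e_1^{\top}=A_t$, so they are normal to $\Stnr$.
\item Hence $\xi_t=0$ and $(B_t,A_t)\equiv(0,e_1^{\top})$.
\end{itemize}
All of (H1)--(H6) hold, (H6) trivially. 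Whichever minimizer serves as comparator, its value is at most $-1+\mu$, so every regret term is at least $2$. Thus $R(T)\ge 2T$, and $R(T)/T\to 0$ does not follow from the stated assumptions.

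\textbf{The Frobenius-norm fix does not help.} Two-sided bounds on $V_t^A+\varepsilon$ make norms equivalent up to constants, but not inner products between different vectors. In a Frobenius expansion the term that telescopes is $\bigl\langle \Proj_{A_t}\bigl((H_t^A)^{-1/2}M_t^A\bigr), A_t-A^*\bigr\rangle$. For a non-scalar diagonal preconditioner this is not comparable to $\langle M_t^A, A_t-A^*\rangle$; the two can even have opposite signs. The weighted norm is used in Adam analyses precisely so that the preconditioner cancels in the cross term. If you keep that norm instead, $\Proj_{A_t}$ is not orthogonal in it, and this leaves the residual
$\bigl\langle (H_t^A)^{1/2}(A_t-A^*),\,(I-\Proj_{A_t})\bigl((H_t^A)^{-1/2}M_t^A\bigr)\bigr\rangle$,
again of order one per step.

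\textbf{The paper's proof has the same hole.} It asserts $\Gamma_t^{-1}\Proj_{A_t}\bigl((H_t^A)^{-1/2}M_t^A\bigr)=M_t^A$. By construction of $\Gamma_t$, the left-hand side lies in $\T_{A_t}\Stnr$, while $M_t^A$ in general does not; in the example above the left-hand side is $0$. The stated bound therefore needs an additional hypothesis that controls this residual directly, and (H6) is not such a hypothesis.

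\textbf{What is fine.} The $B$ block is standard AMSGrad. Your handling of the retraction remainder through $\|E_t\|\lesssim\eta_t^2$, contributing $O(\sum_t\eta_t)=O(\sqrt T)$, is correct. It even avoids the extra sign condition on $E_t$ that the paper's proof invokes.
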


We note that, although the theoretical result requires an AMSGrad-like assumption (H5), in practice the algorithm can be used without the $\max$ update with no loss in performance. The proof of \Cref{thm:convex_regretbound} can be found in~\Cref{app:regret_proof}.

In most practical implementations, the retraction is computed only approximately via a numerical algorithm, such as the fixed-point method used in most of our experiments; see also~\Cref{app:cayley_approx_error}. The proof of \Cref{thm:convex_regretbound} above extends straightforwardly to that case: assuming that the computed retraction has an error of $\delta>0$, then a term $C_6\delta$ has to be added to \eqref{eq:bound_regret_thm}, without significantly affecting the main result of the theorem.

\subsection{Gradient Boundedness and Stability to Large Learning Rates}\label{sec:boundedness}
Working with an orthonormal factor yields a method with bounded gradients, potentially improving stability at large learning rates. Here, we make this point more concrete with an example. Let us consider the rank-$r$ recovery problem in $\R^{n \times n}$,
\[
    \cL(A,B) = \tfrac{1}{2}\|BA - \alpha I_n\|_{\mathrm{F}}^2,
    \qquad B \in \R^{n \times r},\ A \in \R^{r \times n},
\]
trained by plain gradient descent from the canonical LoRA initialization $B_0 = 0$, $A_0$ arbitrary. The first GD step gives $B_1 = \eta\alpha A_0\tr$ and $A_1 = A_0$, so for the scaling $\alpha = 1/\eta$ the two factors align already after one step. Specifically, we have $B_1 = A_1\tr$ and a direct induction argument shows that the iterates preserve $B_k = A_k\tr$ thereafter. Setting $X_k \coloneqq B_k = A_k\tr$, the dynamics take the form $X_{k+1} = X_k - \eta (X_k X_k^\top-\eta^{-1}I)X_k = (2I-\eta X_kX_k^\top)X_k$, thus,  using the SVD $X_k = U_k \Sigma_k V_k\tr$, the dynamics decouple across singular values into the scalar recursion $ \sigma_{i,k+1} = \sigma_{i,k}\bigl(2 - \eta\,\sigma_{i,k}^{2}\bigr) $, which diverges as soon $|2 - \eta\,\sigma_{i,k}^{2}|>1$, i.e., when some $\sigma_{i,k} > \sqrt{3/\eta}$. The largest stable learning rate is therefore dictated by the largest singular value of the iterate, a property of the \emph{parameterization}, not of the underlying optimization landscape.

If instead the constraint $A_k A_k\tr = I_r$ is enforced, the gradient descent update on $B$ simplifies to the affine recursion $ B_{k+1} \;=\; (1-\eta)\, B_k + \eta\alpha\, A_k\tr $, which is bounded for every $0 < \eta < 2$ as $\|A_k\|=1$, regardless of singular-value scale, removing the dependence of the stable learning rate on the iterate. 
This phenomenon is well documented in the Riemannian optimization literature \cite{schotthoefer2025}, and we provide a more precise result in the next~\Cref{prop:bounded_gradient}, whose proof can be found in~\Cref{app:proof_bounded_gradient}.

\begin{proposition}[Gradient boundedness on fibers]
\label[proposition]{prop:bounded_gradient}
Consider the maps $ \Phi \colon \R^{m \times r} \times \R^{r \times n}\to \Rmn $ given by $\Phi(B,A) = BA $ and the map $ \widetilde \Phi = \Phi|_{\Stnr \times \R^{m \times r}} $. Let $W \in \Rmn$ be a fixed matrix with $\rank(W) \leq r$ such that $\nabla \cL(W) \ne 0$. Let $\mathcal F\coloneqq\Phi^{-1}(W)$ and $\widetilde{\mathcal F}\coloneqq \widetilde \Phi^{-1}(W)$. Then,
\[
    \|\nabla (\cL \circ\Phi)\|_{L^\infty(\mathcal F)} = +\infty, \quad \|\nabla (\cL \circ\widetilde \Phi)\|_{L^\infty(\widetilde{\mathcal F})} <+ \infty.
\]
\end{proposition}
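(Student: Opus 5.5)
We plan to compute the gradient of the composed loss explicitly by the chain rule and then compare its behaviour on the two fibers. Writing $G \coloneqq \nabla\cL(W)$, which is a fixed nonzero matrix because $W$ is fixed, the gradient of $\cL\circ\Phi$ at any $(B,A)\in\mathcal F$ is
\[
\nabla(\cL\circ\Phi)(B,A) = \bigl(G A\tr,\; B\tr G\bigr).
\]
The key observation is that on the fiber $\mathcal F$ the product $BA=W$ is frozen. Hence $G$ does not change, and the size of the gradient depends only on the sizes of the individual factors. For the restricted map $\widetilde\Phi$, the Riemannian gradient is the gradient with the $A$-block replaced by its tangent projection $\Proj_{A\tr}(G\tr B)$. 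Since $\Proj$ is an orthogonal projection, its norm is at most $\|G\tr B\|_{\mathrm F}$. So in both cases it suffices to bound $\|GA\tr\|_{\mathrm F}+\|B\tr G\|_{\mathrm F}$ along the fiber.

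\textbf{Unbounded case.} We first fix one point $(B_0,A_0)\in\mathcal F$. Such a point exists because $\rank W\le r$; for example, take a truncated SVD of $W$ and pad with zeros if necessary. We then use the gauge orbit $(B_0M,\,M^{-1}A_0)\subset\mathcal F$ for $M\in\mathrm{GL}(\R^r)$. We choose $M=sI_r$, or more carefully a diagonal $M$ scaling a single direction, and let $s\to\infty$ or $s\to0$, so that one of the two blocks $sB_0\tr G$ or $s^{-1}GA_0\tr$ blows up. This requires that either $B_0\tr G\neq0$ or $GA_0\tr\neq0$ for some representative.

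This nondegeneracy is the main obstacle. If $\rank W<r$, we can use the free directions of the fiber: adding $(B,A)\mapsto(B+uv\tr\text{-type terms},\,A)$ along the kernel keeps the product fixed. Concretely, we pick a column of $B$ multiplying a zero row of $A$. That column is then arbitrary, so we set it to $tu$ with $u\tr G\ne0$, and the $A$-block $B\tr G$ grows linearly in $t$.

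If $\rank W=r$, both $A_0$ and $B_0$ have full rank $r$. If both $GA_0\tr=0$ and $B_0\tr G=0$, we instead use a non-scalar $M$ that rotates or shears the factors; since $G\neq0$, we expect generic choices to work. If that fails, we note that the statement implicitly requires $\nabla(\cL\circ\Phi)\not\equiv0$ on the fiber, and we would handle this by a short case analysis. In the generic case, we take $M=sI$ with $s\to\infty$ whenever $B_0\tr G\ne0$, and $s\to0$ whenever $GA_0\tr\neq0$.

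\textbf{Bounded case.} On $\widetilde{\mathcal F}$ we have $AA\tr=I_r$, so $\|A\|_2=1$. Then $W=BA$ implies $B=WA\tr$, so $\|B\|_2\le\|W\|_2$. Consequently,
\[
\|GA\tr\|_{\mathrm F}+\|\Proj(G\tr B)\|_{\mathrm F}\le \|G\|_{\mathrm F}\,(1+\|W\|_2),
\]
which is a uniform bound. Equivalently, the fiber is the compact $\mathrm O(r)$-orbit and the gradient is continuous on it. This gives the finite $L^\infty$ norm and completes the argument.
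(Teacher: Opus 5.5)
\textbf{The gap: the degenerate full-rank case, where the first claim is actually false.} Your plan to use a non-scalar $M$ when $\rank W=r$, $B_0\tr G=0$ and $GA_0\tr=0$ cannot work, for any choice of $M$, generic or not. When $\rank W=r$, both factors have rank $r$. So $B$ and $B_0$ share a column space, and $B_0$ is left-invertible. Hence every point of $\mathcal F$ has the form $(B_0M,M^{-1}A_0)$ with $M\in\mathrm{GL}(\R^r)$. At such a point, $B\tr G=M\tr B_0\tr G$ and $GA\tr=GA_0\tr(M^{-1})\tr$. Once these vanish at one point, the gradient $(GA\tr,B\tr G)$ vanishes identically on $\mathcal F$.

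This happens exactly when $G\neq0$ is normal to $\cMr$ at $W$, i.e.\ $U\tr G=0$ and $GV=0$ for orthonormal bases $U,V$ of the column and row spaces of $W$. That is possible whenever $r<\min(m,n)$. A concrete counterexample: take $m=n=2$, $r=1$, $\cL(Z)=\tfrac12\|Z-I_2\|_{\mathrm F}^2$ and $W=e_1e_1\tr$. Then $\nabla\cL(W)=-e_2e_2\tr\neq0$ and $\mathcal F=\{(se_1,s^{-1}e_1\tr): s\neq0\}$. The gradient $\nabla(\cL\circ\Phi)=(0,0)$ on all of $\mathcal F$, so $\|\nabla(\cL\circ\Phi)\|_{L^\infty(\mathcal F)}=0$.

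So the first claim is false as stated, and no case analysis rescues it. Your remark that the statement ``implicitly requires'' $\nabla(\cL\circ\Phi)\not\equiv0$ on $\mathcal F$ points to the right fix. It has to be stated as an extra hypothesis, though, not offered as a fallback after an argument that is guaranteed to fail. Equivalently, the hypothesis is: $\rank W<r$, or $\rank W=r$ and the component of $\nabla\cL(W)$ tangent to $\cMr$ at $W$ is nonzero.

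\textbf{What you got right, and how it compares with the paper.} With that hypothesis added, your argument is complete, and in places more careful than the paper's. The paper simply sends $t\to0$ along $(tB,t^{-1}A)$ from an arbitrary point of $\mathcal F$, tacitly assuming $\nabla\cL(W)A\tr\neq0$ there. Your alternative $s\to\infty$ when $B_0\tr G\neq0$, and your free-column construction for $\rank W<r$ (where the claim holds with no extra assumption), cover cases the paper's proof skips.

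For the bounded part, your observation that $AA\tr=I_r$ and $BA=W$ force $B=WA\tr$ gives an explicit uniform bound valid for every rank of $W$. The paper instead identifies $\widetilde{\mathcal F}$ with the orbit $\{(BO,O\tr A): O\in\mathrm O(r)\}$ and invokes compactness. That description of the fiber is correct only when $\rank W=r$; for $\rank W<r$ the fiber is larger, though still compact. The same caveat applies to your parenthetical ``the fiber is the compact $\mathrm O(r)$-orbit,'' but your estimate does not depend on it.
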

In particular, it is known that the boundedness of $\nabla (\cL \circ \Phi)$ is closely related to the range of stable learning rates. This suggests that optimization algorithms on the parameterization $\widetilde \Phi$ are more stable than the ones on the more redundant representation $\Phi$.

\section{Numerical Experiments}\label{sec:experiments}
In this section, to show the effectiveness and scalability of \algname, we present several numerical experiments for both fine-tuning pretrained models with LoRA adapters \citep{hu2022lora} and LLM pretraining. We compare against six baselines: standard \textbf{AdamW} \citep{Loshchilov:2019}; \textbf{Scaled AdamW} \citep{ZhangPilanci2024}, which introduces a coupled preconditioner accounting for the product structure; \textbf{GeoLoRA} \citep{schotthofer2025geolora}, which uses two Stiefel representations; \textbf{LoRA-RITE} \citep{yen2025lora} and \textbf{LoRA-Pro} \citep{Wang2025lorapro}, which modify the gradient structure to reduce sensitivity to the non-uniqueness of the factorization; and \textbf{Cayley Adam} \citep{LFT:2020}, which uses the Riemannian Adam variant proposed in \citep{BecigneulGanea:2019} on the Stiefel manifold. We emphasize that the latter method uses a scalar preconditioner, whereas \algname{} uses a diagonal preconditioner.

\subsection{LoRA Fine-Tuning}\paragraph{GPT2} In this experiment, we tested \algname{} for fine-tuning GPT2 on the E2E Natural Language Generation challenge \citep{novikova2017e2e} with LoRA of rank 4. We report the results in \Cref{tab:table_1}. In all experiments, we trained the models for $5$ epochs with a batch size of $8$. For details on the hyperparameter settings, see \Cref{tab:GPT_finetuning_hyperparams}.
As shown in \Cref{tab:e2e}, \algname{} outperforms all baselines across all tasks except ROUGE-L. Interestingly, we observe that, consistently with the findings of \cite{ZhangPilanci2024} for Scaled AdamW, \algname{} achieves better performance when using more aggressive moving average parameters $\beta_1$, $\beta_2$. This suggests that part of AdamW's update may be spent along invariant directions, while Riemannian-informed approaches avoid this and allow greater emphasis to be placed on the current gradient direction. All numerical experiments were performed on a single NVIDIA A100 80GB, except for GPT2 pretraining, which was performed on two NVIDIA H100 80GB via Modal.

\begin{table*}[ht]
    \centering
    \caption{Fine-tuning performance with low-rank adapters. Best results highlighted in bold. \textbf{Left:} GPT2 on the E2E Natural Language Generation challenge, with rank = 4. AdamW and Scaled AdamW are reported from \cite{ZhangPilanci2024}. \textbf{Right:}  ViT-Base on CIFAR-10, for three different choices of adapter~rank. We report with $\pm \sigma$ the standard deviation over $5$ random initializations.}
    \resizebox{\textwidth}{!}{
    \begin{tabular}{l c c c c c}
    \toprule
    \multicolumn{6}{c}{ $\,$} \\
    \toprule
    \textbf{Method} & \textbf{BLEU} & \textbf{NIST} & \textbf{MET} & \textbf{ROUGE-L} & \textbf{CIDEr} \\
    \midrule
    AdamW \cite{Loshchilov:2019} & 68.41 {$\pm$ 0.4950} & 8.65 {$\pm$ 0.04} & 46.38 {$\pm$ 0.12} & 71.13 {$\pm$ 0.17} & 2.51 {$\pm$ 0.001}\\
    Scaled AdamW \cite{ZhangPilanci2024}     & \textbf{69.17} {$\pm$ 0.43} & 8.72 {$\pm$ 0.058} & 46.44 {$\pm$ 0.16} & \textbf{71.57} {$\pm$ 0.235} & 2.51 {$\pm$ 0.005} \\
    \textbf{\algname} & 69.20 {$\pm$ 0.964} & \textbf{8.74} {$\pm$ 0.102} & \textbf{46.48} {$\pm$ 0.259} & 71.40 {$\pm$ 0.44} & \textbf{2.51} {$\pm$ 0.02} \\
    GeoLoRA \cite{schotthoefer2025} & 68.11 {$\pm$ 0.271} & 8.60 {$\pm$ 0.07} & 45.82 {$\pm$ 0.24} & 70.47 {$\pm$ 0.388} & 2.41 {$\pm$ 0.02} \\
    LoRA-RITE \cite{yen2025lora} & 68.98 {$\pm$ 1.03} & 8.69 {$\pm$ 0.11} & 46.35 {$\pm$ 0.219} & 71.17 {$\pm$ 0.410} & 2.48 {$\pm$ 0.05} \\
    LoRA-Pro \cite{Wang2025lorapro} & 68.12 {$\pm$ 0.25} & 8.61 {$\pm$ 0.06} & 45.82 {$\pm$ 0.250} & 70.46 {$\pm$ 0.382} & 2.42 {$\pm$ 0.02} \\
    Cayley Adam \cite{LFT:2020} & 68.97 {$\pm$ 1.04} & 8.69 {$\pm$ 0.11} & 46.36 {$\pm$ 0.215} & 71.16 {$\pm$ 0.415} & 2.48 {$\pm$ 0.05} \\
    \bottomrule
    \end{tabular}
    \hspace{3em}
    \begin{tabular}{ccc}
    \toprule
    \multicolumn{3}{c}{\textbf{Rank}} \\
    \toprule
    \textbf{32} & \textbf{64} & \textbf{128} \\
    \midrule
     95.6 {$\pm 0.2$}          & 95.55 {$\pm {0.15}$}          & 95.82 {$\pm {0.29}$}          \\
     92.25 {$\pm 0.28$}          & 94.83 {$\pm {0.31}$}          & 95.30 {$\pm {0.12}$}          \\
     \textbf{95.91} {$\pm0.19$} & \textbf{96.04} {$\pm {0.15}$} & \textbf{96.41} {$\pm {0.10}$} \\
     95.53 {$\pm 0.44$}          & 95.12 {$\pm {0.37}$}          & 95.27 {$\pm {0.23}$}          \\
     94.55 {$\pm 0.32$}          & 94.98 {$\pm {0.10}$}          & 94.89 {$\pm {0.24}$}          \\
     90.17 {$\pm 2.30$}          & 94.37 {$\pm {0.26}$}          & 94.32 {$\pm {0.21}$}          \\
     95.44 {$\pm 0.92$ }          & 95.71 {$\pm {0.17}$}          & 96.12 {$\pm {0.11}$}          \\
    \bottomrule
    \end{tabular}
    }
    \label{tab:table_1}
\end{table*}

\paragraph{Vision Transformers}\label{subsec:vit_ft}

In this experiment, we fine-tuned the base Vision Transformer from \cite{dosovitskiy2021an} on CIFAR-10 \cite{krizhevsky2009learning}, with results shown in the right part of \Cref{tab:table_1}. In \Cref{fig:vit}, we compare loss descent and time per iteration against the best loss achieved. All models have been trained for $50$ epochs with a batch size of $64$, LoRA alpha $32$, learning rate $10^{-3}$, and no scheduler. For all optimizers, we used weight decay of $10^{-5}$ on all adapters, applied to the key-query attention matrices, attention projection, and the last two fully connected layers. We did not optimize biases and left them as in the pretrained model.
As we can observe from the results in the right panel of \Cref{tab:e2e} and \Cref{fig:vit}, \algname{} is able to outperform all baselines in terms of performance, with convergence speed comparable to that of Scaled AdamW \citep{ZhangPilanci2024} and AdamW \citep{Loshchilov:2019}. 

\begin{figure*}[t]
    \centering
    \caption{\textbf{Left and center:} Loss function descent for ViT Base on CIFAR-10 LoRA fine-tuning (LoRA rank $64$ and learning rate $10^{-3}$). \textbf{Right:} Learning rate versus best training loss for different optimizers when fine-tuning ViT Base on CIFAR-10. In this experiment, we fixed the LoRA rank at $32$ and trained all models for $50$ epochs.
    }
    \begin{tabular}{ccc}
        \includegraphics[height=0.28\columnwidth]{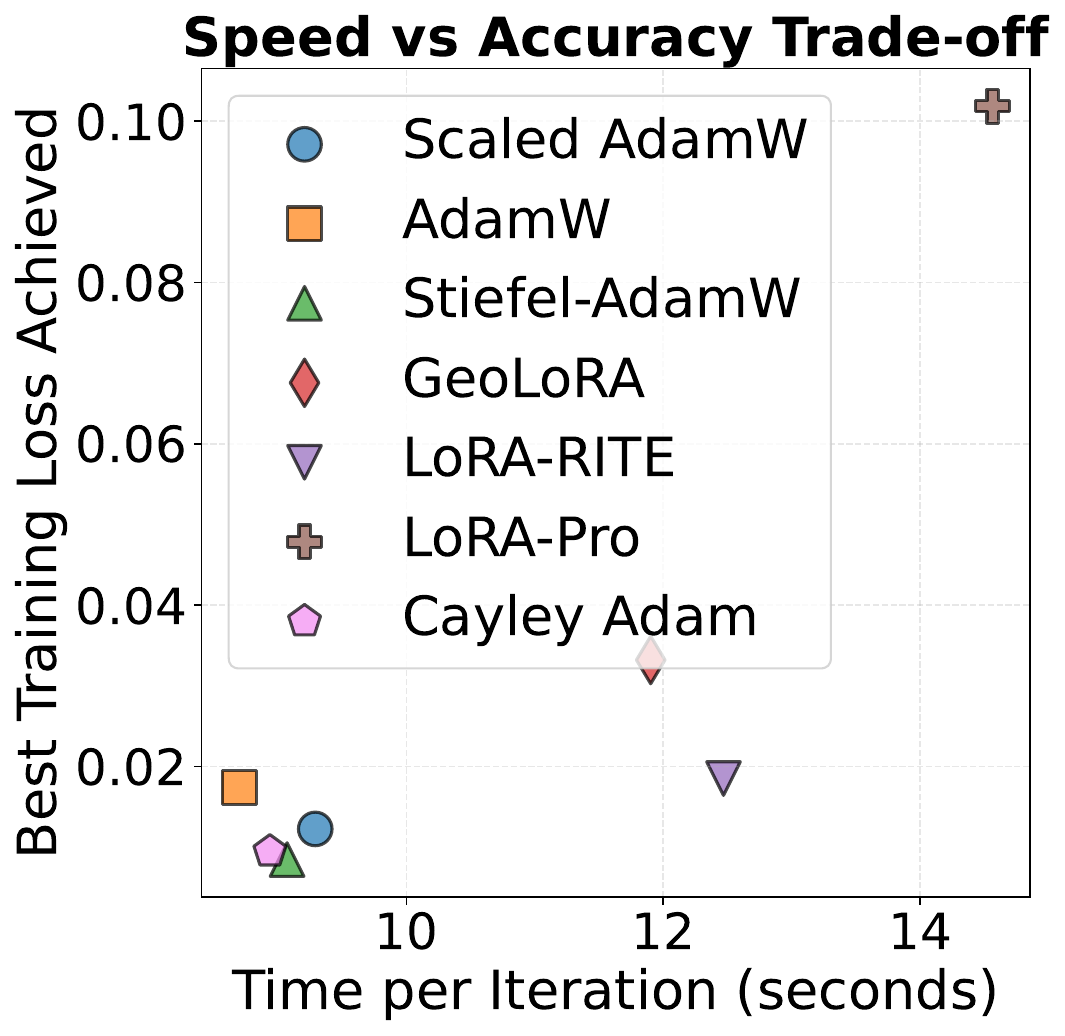} &\includegraphics[height=0.28\columnwidth]{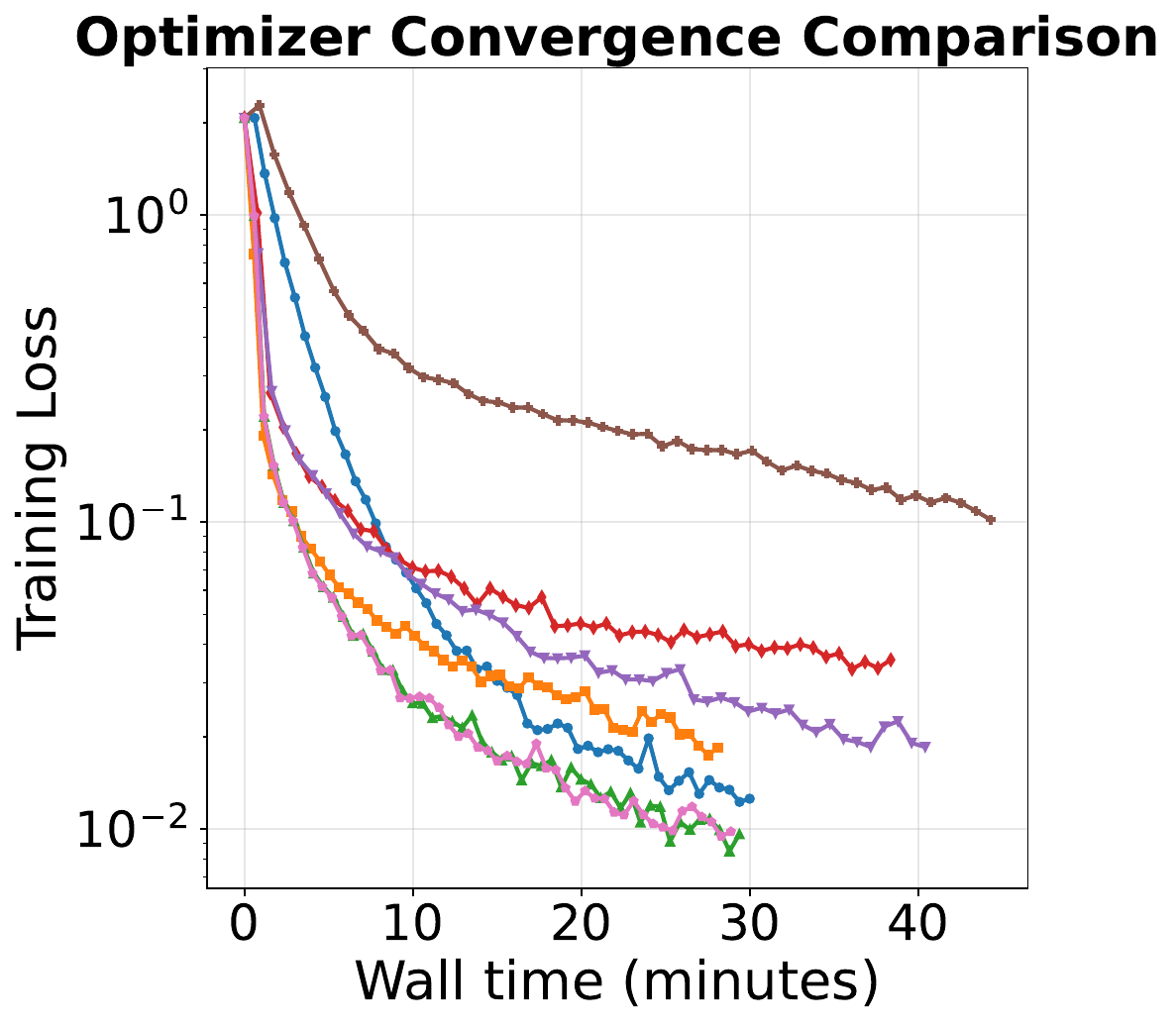}  &\includegraphics[height=0.28\columnwidth]{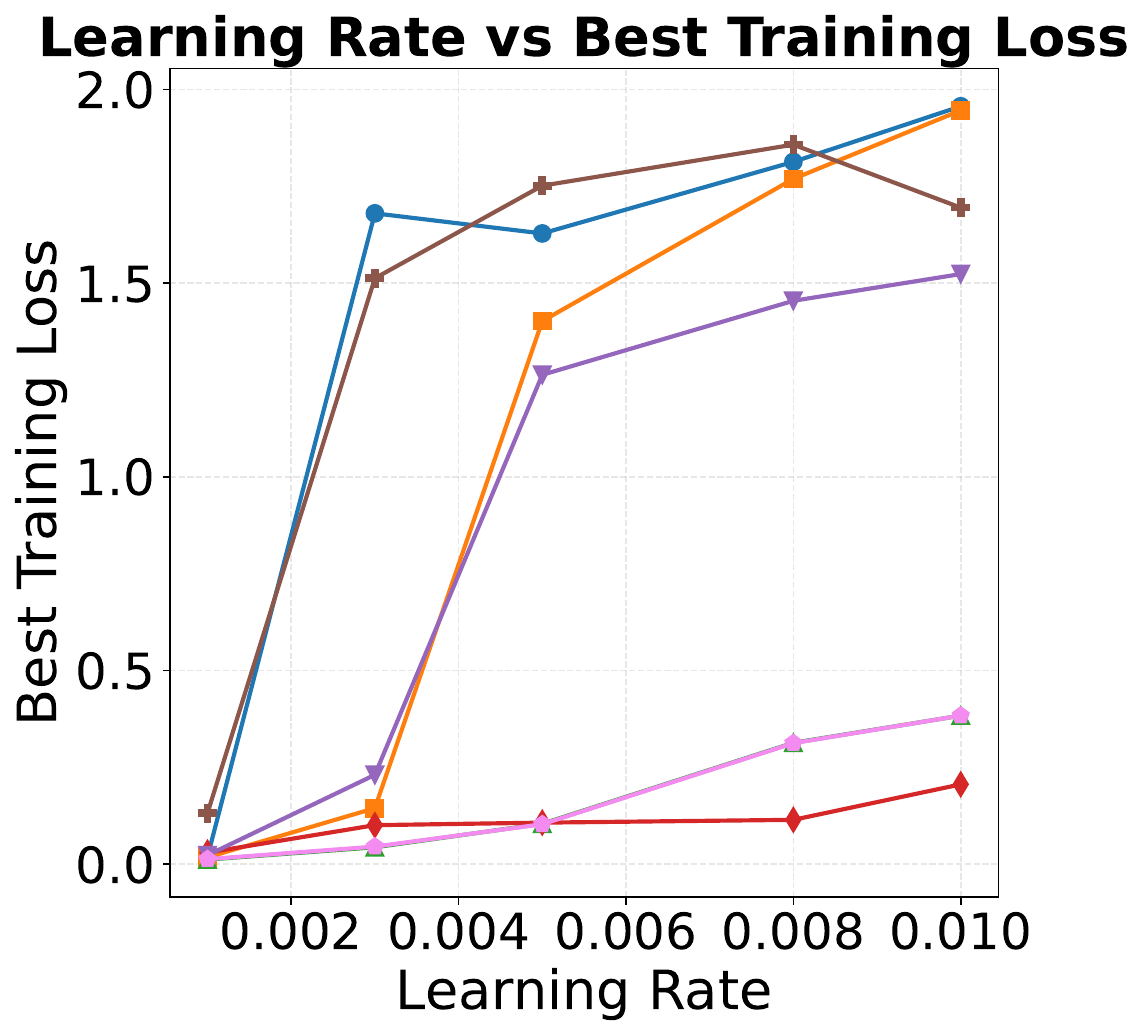}
    \end{tabular}
    \label{fig:vit}
\end{figure*}


\paragraph{Mistral 7B}

In this experiment, we tested the effectiveness of \algname{} for fine-tuning Mistral 7B \citep{jiang2023mistral7b} on the GLUE benchmark \citep{wang2019gluemultitaskbenchmarkanalysis} for natural language understanding, following the implementation in \citep{ZhangPilanci2024}. LoRA adapters of rank $16$ have been applied to all query, key, value projection, and gate matrices of multihead attention. We did not train biases, and for all optimizers, we set the LoRA alpha learning-rate scaling parameter to $16$. We used mixed precision for all optimizers: the base model was loaded in its $4$-bit quantized version, optimizer states were in float$32$, and operations were performed in mixed-precision bfloat16. We trained all models using the codebase of \cite{ZhangPilanci2024}, with a dropout of $0.1$ and a batch size of $8$ across all models. For AdamW, Scaled AdamW, and GeoLoRA, we used the optimal hyperparameters (learning rate and $\beta_1,\beta_2$) from~\cite{ZhangPilanci2024}; for \algname{}, we used $\beta_1 = \beta_2 = 0.95$. For each GLUE task, we report the value of the standard test metric (either accuracy or correlation) and the percentage deviation from the best performer among all optimizers.
As shown in \Cref{tab:mistral_results}, \algname{} outperforms all baselines in terms of average score. On single tasks, \algname{} outperforms all baselines on MNLI, MRPC, STS-B, and WNLI, while maintaining a competitive performance on all other tasks.

\begin{table*}[t]
    \centering
    \caption{Scores for rank $16$ LoRA fine-tuning of the 4-bit quantized Mistral 7B model on the GLUE benchmark for Natural Language Understanding (NLU) challenges with different optimizers (best results highlighted in bold). In parentheses, we report, for each task, the percentage deviation from the best performer. SGD, Scaled GD, and Scaled AdamW results reported from \cite[Table 2]{ZhangPilanci2024}.}
    \resizebox{\textwidth}{!}{
    \begin{tabular}{lcccccccccc}
        \toprule
        \multirow{2}{*}{\textbf{Method}} & 
        \multicolumn{10}{c}{\textbf{GLUE}}  \\ 
        \cmidrule{2-11}
        &    \textcolor{black}{MNLI} & \textcolor{black}{SST-2} & \textcolor{black}{MRPC} & \textcolor{black}{CoLA} & \textcolor{black}{QNLI} & \textcolor{black}{QQP} & \textcolor{black}{RTE} & \textcolor{black}{STS-B} & \textcolor{black}{WNLI} & Avg.\\
        \midrule
        
        \multirow{2}{*}{SGD}  
        & 88.15 & 96.10 & 70.10 & 55.89 & 94.22 & 88.59 & 50.90 & 47.64 & 49.30 & \multirow{2}{*}{71.21} \\
        & (-4.14\%) & (-1.18\%) & (-22.07\%) & (-22.23\%) & (-1.21\%) & (-3.94\%) & (-44.27\%) & (-48.37\%) & (-43.54\%) & \\[0.3em]
        
        \multirow{2}{*}{Scaled GD \cite{ZhangPilanci2024}}  
        & 90.21 & 96.90 & 81.62 & 68.17 & 94.40 & 91.15 & 54.15 & 90.31 & 56.34 & \multirow{2}{*}{80.36} \\
        & (-1.89\%) & (-0.36\%) & (-9.26\%) & (-5.14\%) & (-1.01\%) & (-1.16\%) & (-40.72\%) & (-2.13\%) & (-35.48\%) & \\[0.3em]
        
        \multirow{2}{*}{AdamW}  
        & 91.64 & \textbf{97.25} & 87.01 & \textbf{71.87} & 94.79 & 91.81 & 90.25 & 90.51 & 85.91 & \multirow{2}{*}{89.00} \\
        & (-0.34\%) &  & (-3.27\%) &  & (-0.61\%) & (-0.44\%) & (-1.19\%) & (-1.92\%) & (-1.61\%) & \\[0.3em]
        
        \multirow{2}{*}{Scaled AdamW \cite{ZhangPilanci2024}}  
        & 90.68 & \multirow{2}{*}{\textbf{97.25}} & 89.46 & 71.30 & 94.67 & \multirow{2}{*}{\textbf{92.22}} & \multirow{2}{*}{\textbf{91.34}} & 91.10 & 83.10 & \multirow{2}{*}{89.01} \\
        & (-1.38\%) &  & (-0.55\%) & (-0.79\%) & (-0.73\%) &  &  & (-1.28\%) & (-4.83\%) & \\[0.3em]
        
        \multirow{2}{*}{\textbf{\algname}}  
        & \multirow{2}{*}{\textbf{91.95}} & 96.79 & \multirow{2}{*}{\textbf{89.95}} & 70.61 & 94.78 & 91.83 & 90.61 & \multirow{2}{*}{\textbf{92.28}} & \multirow{2}{*}{\textbf{87.32}} & \multirow{2}{*}{\textbf{89.57}} \\
        &  & (-0.47\%) &  & (-1.75\%) & (-0.62\%) & (-0.42\%) & (-0.80\%) &  &  & \\[0.3em]
        
        \multirow{2}{*}{GeoLoRA \cite{schotthofer2025geolora}}  
        & 91.30 & 94.61 & 87.26 & 69.78 & \multirow{2}{*}{\textbf{95.37}} & 90.80 & 88.81 & 91.45 & \multirow{2}{*}{\textbf{87.32}} & \multirow{2}{*}{88.52} \\
        & (-0.71\%) & (-2.71\%) & (-2.99\%) & (-2.91\%) &  & (-1.54\%) & (-2.77\%) & (-0.90\%) &  & \\[0.3em]
        
        \bottomrule
    \end{tabular}
    }
    \label{tab:mistral_results}
\end{table*}
%
%
\subsection{GPT2 Pretraining}
As the proposed method works for all problems in which the manifold of $\rank$-$r$ matrices appears, it also applies directly to pretraining transformer-based architectures. In particular, self-attention naturally respects this structure, as the image of the map $(W_Q,W_K) \in \R^{n \times r} \times \R^{n \times r} \mapsto W_Q W_K\tr$ is exactly the set $\cMr$. Despite this, the parametrization map is highly non-injective, and therefore the problem could be restated equivalently by minimizing on $\cMr \cong \mathrm{St}(n,r) \times \R^{n \times r}_*/\mathrm{O}(r)$ instead of $\R^{n \times r} \times \R^{n \times r}$. In contrast, we optimize all other parameters, such as biases or non-structured matrices, with the standard AdamW step.
In \Cref{tab:gpt_pretrain} we present the results for pretraining GPT2 \citep{radford2019language} on OpenWebText \citep{Gokaslan2019OpenWeb} using Karpathy's reproduction\footnote{https://github.com/karpathy/nanoGPT}. We reproduced the pretraining for \algname{} using exactly the same AdamW hyperparameters from the repository. We trained both AdamW and \algname{} for $7000$ iterations. As shown, \algname{} produces results comparable to standard AdamW \citep{Loshchilov:2019} in both performance and peak GPU memory usage. We performed no hyperparameter tuning, and \algname{} uses the same hyperparameters as AdamW, as shown in the reproduced repository. This experiment highlights that \algname{} is not limited to fine-tuning scenarios such as LoRA, but can be applied directly in pretraining settings where low-rank structure arises naturally.

\begin{table*}[t]
    \centering
    \caption{\textbf{Left}: Full GPT2 pretraining on OpenWeb Text for 6000 iterations. \textbf{Right:} Ablation over different retractions on ViT Base.}
    \resizebox{\textwidth}{!}{
        \begin{tabular}{l c c}
            \toprule
            \multirow{2}{*}{\textbf{Method}}& \multirow{2}{*}{\textbf{Test Loss $\pm \sigma$}} & \textbf{Peak Memory} \\
            & & (GB)\\
            \midrule
            AdamW            & 3.260 $\pm$ 0.05 & 13.8\\
            \textbf{\algname} & \textbf{3.236} $\pm$ 0.07 & 13.8 \\
            \bottomrule
        \end{tabular} 
    \label{tab:gpt_pretrain}
    \hspace{4em}
        \begin{tabular}{l c c c c}
            \toprule
            \multirow{2}{*}{\textbf{Retraction} }  & \textbf{Test acc.} & \textbf{Test acc. }    & \textbf{Test acc. }  \\ 
            & ($r = 32$) & ($r = 64$) & ($r = 128$)\\
            \midrule
            Cayley FP        & 96.19               & 96.25 & 96.45 \\ 
            Cayley SMW       & 96.11               & 96.09 & 96.52 \\ 
            Cayley direct    & 96.11               & 96.22 & 96.61 \\ 
            QR               & 95.41               & 94.83 & 93.71 \\ 
            Polar            & 96.11               & 96.22 & 96.61 \\ 
            Newton--Schulz   & 96.11               & 96.20 & 96.52 \\ 
            \bottomrule
        \end{tabular}
    }
    \label{tab:e2e}
\end{table*}

\subsection{Qwen2 Pretraining}

To showcase compatibility of our proposed approach with the presence of positional embeddings such as RoPE \cite{rope}, in \Cref{tab:qwen} we also present numerical results for Qwen2~\cite{yang2024qwen2technicalreport} LoRA pretraining on WikiText-103 \cite{merity2016pointer}.
The model with the adapters has 100M parameters; it was trained for 20K steps with a batch size of $16$ and two steps of gradient accumulation.
As shown in \Cref{tab:qwen}, \algname{} consistently outperforms the Euclidean version of AdamW in both mean and variance.

\begin{table}[!ht]
    \centering
    \caption{Qwen2 pretraining on Wikitext-103, standard deviation reported over $5$ random seeds.}
    \begin{tabular}{l c}
            \toprule
            \multirow{2}{*}{\textbf{Method}}& \multirow{2}{*}{\textbf{Loss $\pm \sigma$}} \\ \\
            \midrule
            AdamW            & 2.66 $\pm$ 0.05 \\
            \textbf{\algname} & \textbf{2.59} $\pm$ 0.004  \\
            \bottomrule
        \end{tabular} 
    \label{tab:qwen}
\end{table}

\vspace{-1em}

\subsection{Stepsize Stability}
Motivated by \Cref{prop:bounded_gradient}, we show the stability of \algname{} with respect to the learning rate, numerically demonstrating that methods with compact fibers are more stable with respect to learning-rate size. \\
We fine-tuned multiple vision transformers (ViT Base) on CIFAR-10 for different learning rates, keeping all other hyperparameters fixed as in \Cref{subsec:vit_ft}. In the right panel of \Cref{fig:vit}, we plot the learning rate against the best loss obtained during training. As expected, the results in \Cref{fig:vit} show that ``pure'' Riemannian methods such as GeoLoRA \cite{schotthofer2025geolora} and the proposed \algname{} are more stable with respect to learning-rate magnitude. \\
In particular, Riemannian methods in which the fiber of $\Phi$ is not compact (such as AdamW and Scaled AdamW, which are defined in $\R^{r \times n} \times \R^{m \times r}$), appear to be less stable with respect to larger learning rates, despite the preconditioning (i.e., different metric) employed in Scaled AdamW \cite{ZhangPilanci2024}.

\section{Conclusions, Limitations, and Future Work}

In this work, we presented \algname, a stochastic Riemannian variant of AdamW that naturally provides convergence guarantees. The compactness of the space in which one factor lives helps avoid potential numerical instabilities that can arise from unbalanced initializations. One key advantage of the proposed method is its simplicity of implementation, which requires only minimal machinery from Riemannian optimization theory while maintaining guarantees.\\
We demonstrated the method's effectiveness and scalability across a range of problems, from fine-tuning to pretraining LLMs. One limitation of the current method is that it still retains a set of orthogonal invariances, which stems from the entrywise nature of the Adam algorithm. While this can be solved by imposing a gauge condition and working on horizontal spaces on the quotient space $\cMr$, it requires additional computational effort, as the gradient needs to be computed using joint information on the pair $(B,A)$, $ (\nabla_B \cL, \nabla_A \cL) $, and therefore does not allow for full parallelization.
Future research could extend the current method to different preconditioners and propose a similarly simple version that is fully invariant on the manifold of fixed-rank matrices and can partially mitigate this extra computational effort.

\begingroup
    \small
    \bibliographystyle{alpha}
    \bibliography{arXiv_biblio}
\endgroup

\newpage
\appendix
\onecolumn
\section{Appendices}

\subsection{Geometry of the Stiefel Manifold} \label[appendix]{app:stiefel_geometry}
The (column-orthonormal) Stiefel manifold is defined as
\[
    \Stnr \coloneqq \{ X \in \R^{n \times r} \colon X\tr \! X = I_{r} \}.
\]
It is a smooth embedded submanifold of $\R^{n \times r}$ of dimension $ nr - \tfrac{1}{2} r(r+1) $.

The tangent space at a point $ X \in \Stnr $ is given by
\[
    \T_{X} \Stnr = \left\{ \xi \in \R^{n \times r}  \colon  X\tr \! \xi + \xi\tr \! X = 0
    \right\}.
\]
Equivalently, any tangent vector $\xi \in \T_{X} \Stnr$ can be decomposed as
\[
    \xi = X \Omega + X_{\perp} K,
\]
where $ \Omega \in \R^{r \times r} $ is skew-symmetric, $X_\perp \in \R^{n \times (n-r)}$ satisfies $ [X \; X_\perp] \in \mathrm{O}(n) $, $ \mathrm{O}(n) $ being the orthogonal group, and $ K \in \R^{(n-r) \times r} $.

\subsubsection{Orthogonal Projection onto \texorpdfstring{$ \T_{X}\Stnr $}{TX St(n,r)}}
For any matrix $ \xi \in \R^{n \times r} $, its orthogonal projection onto $ \T_{X} \Stnr $ with respect to the Euclidean inner product is given by
\begin{equation}\label{eq:proj_stiefel}
    \Proj_{X}(\xi) = X \, \mathrm{skew}(X\tr \! \xi) + (I_{n} - XX\tr)\,\xi,
\end{equation}
where $ \mathrm{skew}(M) \coloneqq \tfrac{1}{2}(M - M\tr) $ denotes the skew-symmetric part of a square matrix.

Equation~\eqref{eq:proj_stiefel} admits the equivalent and more compact expression
\[
    \Proj_{X}(\xi) = \xi - X \, \mathrm{sym}(X\tr\! \xi),
\]
where $\mathrm{sym}(M) \coloneqq \tfrac{1}{2}(M + M\tr)$ denotes the symmetric part. This formula is commonly used in Riemannian optimization on the Stiefel manifold; see, e.g., \citep[Prop.~3.6.1]{AMS:2008}.

\subsubsection{Row-Orthonormal Stiefel Manifold}
In this work, we enforce a \emph{row-orthonormal} constraint on the factor $A \in \R^{r \times n}$, namely,
\[
    A A\tr = I_r.
\]
This corresponds to working with the transpose variable $ X \coloneqq A\tr \in \Stnr$. All Riemannian operations (projection, retraction, and gradient computation) are therefore performed on $X$, and the updated factor is recovered as $ A = X\tr $.

\subsection{Cayley Transform and Retraction} \label[appendix]{app:cayley}
A commonly used retraction on the Stiefel manifold $ \Stnr = \{ X \in \R^{n \times r} \colon X\tr \! X = I_{r} \}$ is based on the Cayley transform.
The Cayley transform generates a smooth curve on the Stiefel manifold by exponentiating a skew-symmetric matrix in a rational form, thereby avoiding explicit matrix exponentials.

The closed-form Cayley retraction at a point $ X \in \Stnr $ is defined as
\begin{equation}\label{eq:exact_cayley_transform}
   Y(\alpha) = \left( I_{n} - \tfrac{\alpha}{2} \Omega \right)^{-1}
     \left( I_{n} + \tfrac{\alpha}{2} \Omega \right) X,
\end{equation}
where $ \Omega \in \R^{n \times n} $ is a skew-symmetric matrix and $\alpha \ge 0$ is a step-size parameter.
The curve satisfies
\[
   Y(0) = X,  \qquad \frac{\mathrm{d}}{\mathrm{d}\alpha} Y(\alpha)\big|_{\alpha=0} = \Omega X,
\]
and therefore defines a valid first-order retraction on the Stiefel manifold~\citep{AMS:2008}.

Computing the closed-form expression~\eqref{eq:exact_cayley_transform} requires solving a linear system involving an $n \times n$ matrix, which can be computationally expensive for large $n$.
A fixed-point approximation of the Cayley transform is given by
\begin{equation} \label{eq:cayley_fixed_point_map}
   Y(\alpha) = X + \frac{\alpha}{2} \, \Omega \bigl( X + Y(\alpha) \bigr).
\end{equation}
Starting from the initialization $ Y_{0} = (I + \alpha \Omega) X $, this fixed-point equation can be solved with a small number of iterations, each involving only matrix multiplications.
In practice, only a few iterations are sufficient to obtain an accurate approximation of the exact Cayley retraction.

\subsubsection{Cayley Retraction in Our Setting}
As mentioned in the main text, in the factorization of a weight matrix, we work with the row-orthonormal factor of size $ r $-by-$ n $ satisfying $ A A\tr = I_{r} $. For convenience, we switch to a column-orthonormal representation so that $ X \coloneqq A\tr \in \Stnr $.

Let $ \xi \in \T_{X_{t}} \Stnr $ be a tangent vector at $ X_{t} \coloneqq A_{t}\tr $.
Following \citep[Eq.~(2)]{LFT:2020}, the Cayley retraction used in our algorithm is defined as
\[
    X_{t+1} = \left( I_n - \tfrac{1}{2} \Omega \right)^{-1} \left( I_n + \tfrac{1}{2} \Omega \right) X_{t}, \qquad  A_{t+1} \coloneqq X_{t+1}\tr,
\]
where the skew-symmetric matrix $\Omega$ is constructed as
\[
    \widehat{\Omega} = \xi X_{t}\tr - \tfrac{1}{2} X_{t} ( X_{t}\tr \xi X_{t}\tr),
    \qquad \Omega = \widehat{\Omega} - \widehat{\Omega}\tr.
\]
In practice, we compute the Cayley retraction using the fixed-point iteration~\eqref{eq:cayley_fixed_point_map}. 
This yields an efficient and numerically stable retraction satisfying the standard first-order retraction conditions $ \Retraction_{x}(0_{x}) = x $ and $ \D \! \Retraction_x({0}_{x}) = \mathrm{Id} $ required for the convergence analysis of Riemannian optimization methods; see, e.g., \citep[\S 4.1]{AMS:2008}.

\subsubsection{Fixed-Point Approximation of the Cayley Retraction} \label[appendix]{app:cayley_approx_error}
In this section, we analyze the fixed-point iteration used to approximate the Cayley retraction and show that it geometrically converges to the exact Cayley transform. Most importantly, we further show that the resulting approximate mapping retains the retraction properties required by the convergence theory.

Let $ \{ Y_{i} \}_{i \geq 0}$ be the sequence of fixed-point iterates defined by
\begin{equation}\label{eq:fixed_point_iteration}
    Y_{i+1} = \mathcal F(Y_{i}),
\end{equation}
where $ \cF $ is the fixed-point map of~\eqref{eq:cayley_fixed_point_map}, namely, $ \cF(Y) \coloneqq X + \frac{\alpha}{2} \Omega (X + Y) $, and the initialization is $ Y_{0} = (I + \alpha \Omega) X $.
We recall the contraction property of a fixed-point map.
\begin{lemma}[Contraction of the fixed-point map] \label{lem:fp_contraction}
Assume that $ \alpha \|\Omega\|_2 < 2 $ ($ \alpha \geq 0 $). Then $ \cF $ is a contraction mapping on $\R^{n \times r}$ with contraction factor
\[
    \rho \coloneqq \frac{\alpha}{2} \| \Omega \|_2 < 1.
\]
\end{lemma}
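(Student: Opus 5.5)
The map $\cF(Y) = X + \frac{\alpha}{2}\Omega(X+Y)$ is affine in $Y$, so I will show the contraction property directly from the difference of two images. For arbitrary $Y, Z \in \R^{n \times r}$ the constant term $X + \frac{\alpha}{2}\Omega X$ cancels, leaving
\[
    \cF(Y) - \cF(Z) = \tfrac{\alpha}{2}\,\Omega\,(Y - Z).
\]
The whole argument then reduces to bounding the norm of a left multiplication by $\Omega$.

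I will measure distances on $\R^{n \times r}$ in the Frobenius norm. This makes $\R^{n\times r}$ a complete metric space, which is what the Banach fixed-point theorem needs afterwards. I then use the standard inequality $\|\Omega E\|_{\mathrm F} \le \|\Omega\|_2 \|E\|_{\mathrm F}$, which follows by applying $\|\Omega e_j\|_2 \le \|\Omega\|_2\|e_j\|_2$ to each column $e_j$ of $E$ and summing squares. This gives
\[
    \|\cF(Y)-\cF(Z)\|_{\mathrm F} \le \tfrac{\alpha}{2}\|\Omega\|_2\,\|Y-Z\|_{\mathrm F} = \rho\,\|Y-Z\|_{\mathrm F}.
\]
The same bound holds verbatim in the spectral norm, by submultiplicativity, if one prefers that metric.

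The hypothesis $\alpha\|\Omega\|_2<2$, together with $\alpha \ge 0$, gives $0 \le \rho < 1$, so $\cF$ is a contraction. The skew-symmetry of $\Omega$ is not needed for this step. It matters only later, to guarantee that $I - \frac{\alpha}{2}\Omega$ is invertible and that the fixed point coincides with the exact Cayley transform~\eqref{eq:exact_cayley_transform}.

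None of the steps is a real obstacle. The only point requiring care is to state which norm makes $\R^{n\times r}$ a complete metric space, and to justify the mixed spectral/Frobenius inequality, since $\rho$ is defined through $\|\Omega\|_2$.
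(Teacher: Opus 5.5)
Your proof is correct and is the same argument as the paper's. The constant term cancels, leaving $\tfrac{\alpha}{2}\Omega(Y_1-Y_2)$, whose norm is bounded by $\tfrac{\alpha}{2}\|\Omega\|_2\|Y_1-Y_2\|$; your explicit choice of the Frobenius norm and your column-wise proof of $\|\Omega E\|_{\mathrm F}\le\|\Omega\|_2\|E\|_{\mathrm F}$ make precise what the paper leaves implicit.

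One correction to your closing aside. Under $\alpha\|\Omega\|_2<2$, the matrix $I-\tfrac{\alpha}{2}\Omega$ is already invertible by a Neumann series. Skew-symmetry is needed instead to make the Cayley factor orthogonal, so that the fixed point lies on $\Stnr$.
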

\begin{proof}
For any $ Y_{1}, \, Y_{2} \in \R^{n \times r} $, 
\[
    \| \cF(Y_{1}) - \cF(Y_{2}) \| = \left\| \frac{\alpha}{2} \Omega Y_{1} - \frac{\alpha}{2} \Omega Y_{2} \right\| \leq \frac{\alpha}{2} \| \Omega \|_{2} \, \| Y_{1} - Y_{2} \| < \| Y_{1} - Y_{2} \|,
\]
which shows that $ \cF $ is a contraction mapping.
\end{proof}

\begin{theorem}[Geometric convergence to the Cayley retraction] \label{thm:fp_convergence}
    Let $ Y^{\star} \coloneqq \Retraction_X(\alpha \xi) $ be the exact Cayley retraction of $ \alpha \xi $ at $ X $.
    Under the assumptions of Lemma~\ref{lem:fp_contraction}, the fixed-point iterates defined by~\eqref{eq:fixed_point_iteration} satisfy
    \begin{equation}\label{eq:banach_fp_result}
        \forall i \geq 0, \qquad \| Y_{i} - Y^{\star} \| \leq \rho^{i} \, \| Y_{0} - Y^{\star} \|.
    \end{equation}
\end{theorem}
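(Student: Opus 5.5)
The plan is the standard Banach fixed-point argument, applied once we know that the exact Cayley retraction $Y^{\star}$ is itself a fixed point of $\cF$.

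\textbf{Fixed-point property.} By definition \eqref{eq:exact_cayley_transform} with step $\alpha$,
\[
\bigl(I_n - \tfrac{\alpha}{2}\Omega\bigr)Y^{\star} = \bigl(I_n + \tfrac{\alpha}{2}\Omega\bigr)X .
\]
The inverse exists because $\Omega$ is skew-symmetric, so $\frac{\alpha}{2}\Omega$ has purely imaginary eigenvalues and $I_n - \frac{\alpha}{2}\Omega$ is always invertible. Rearranging gives $Y^{\star} = X + \frac{\alpha}{2}\Omega X + \frac{\alpha}{2}\Omega Y^{\star} = \cF(Y^{\star})$. Alternatively, uniqueness follows from Lemma~\ref{lem:fp_contraction}: a contraction has at most one fixed point, so $Y^{\star}$ is the unique one. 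I will be explicit that $Y^{\star}$ is the Cayley curve \eqref{eq:exact_cayley_transform} evaluated at parameter $\alpha$ with the same $\Omega$ used in $\cF$. This is just a matter of matching the paper's notation $\Retraction_X(\alpha\xi)$, where $\Omega$ is built from $\xi$ as described.

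\textbf{Induction.} For each $i \ge 0$, Lemma~\ref{lem:fp_contraction} gives
\[
\|Y_{i+1}-Y^{\star}\| = \|\cF(Y_i)-\cF(Y^{\star})\| \le \rho\,\|Y_i - Y^{\star}\| .
\]
Induction on $i$ then yields \eqref{eq:banach_fp_result}, and the base case $i=0$ is trivial. The norm should be the one in which the lemma was proved, e.g.\ Frobenius, using $\|\Omega Z\|_{\mathrm F}\le\|\Omega\|_2\|Z\|_{\mathrm F}$.

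\textbf{Main obstacle.} There is no real obstacle; the only point needing care is identifying $Y^{\star}$ as the fixed point of $\cF$ for the specific $\Omega$ and $\alpha$. Optionally, I would also bound the initial error: $Y_0 - Y^{\star} = \cF(X) - \cF(Y^{\star})$, since $\cF(X) = X + \alpha\Omega X = Y_0$. Hence
\[
\|Y_0 - Y^{\star}\| \le \rho\,\|X - Y^{\star}\| = O(\alpha^2) ,
\]
because $Y^{\star} - X = O(\alpha)$. This explains the quoted $o(\alpha^{2+k})$ rate and feeds the retraction-error term $C_6\delta$ mentioned after Theorem~\ref{thm:convex_regretbound}.
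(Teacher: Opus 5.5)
Your proposal is correct and follows the paper's route. The paper's proof invokes the Banach fixed-point theorem via Lemma~\ref{lem:fp_contraction}, with $Y^{\star}$ as the unique fixed point of $\cF$, which is your one-step contraction estimate iterated by induction. Your version is more explicit than the paper's: you verify $Y^{\star}=\cF(Y^{\star})$ by rearranging $\bigl(I_n-\tfrac{\alpha}{2}\Omega\bigr)Y^{\star}=\bigl(I_n+\tfrac{\alpha}{2}\Omega\bigr)X$ and justify the inverse from skew-symmetry, steps the paper only asserts.
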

\begin{proof}
    Since by Lemma~\ref{lem:fp_contraction} $ \cF $ is a contraction mapping, the Banach fixed-point theorem applies, with the exact retraction $ Y^{\star} = \Retraction_{X}(\alpha \xi) $ being the unique fixed point of $ \cF $. The theorem's result~\eqref{eq:banach_fp_result} follows directly.
\end{proof}

We are now in the position to state the properties of the approximate retraction, which we denote by
\[
    \widetilde{\Retraction}_{X}(\alpha \xi) \coloneqq Y_{s},
\]
where $s$ is the total number of fixed-point iterations performed.

\begin{lemma}[Second-order accuracy] \label{lem:approx_retraction}
    For sufficiently small $\alpha$ and any fixed number of iterations $s$, the approximate Cayley retraction satisfies
    \[
        \widetilde{\Retraction}_X(\alpha \xi) = X + \alpha \xi + \cO(\alpha^{2}) + \cO( \rho^{s} ),
    \]
    and converges to the exact Cayley retraction as $ s \to \infty $.
\end{lemma}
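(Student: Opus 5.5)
The plan is to split the error by the triangle inequality:
\[
\widetilde{\Retraction}_X(\alpha\xi) - (X+\alpha\xi) = \bigl(Y_s - Y^\star\bigr) + \bigl(Y^\star - X - \alpha\xi\bigr),
\]
where $Y^\star = \Retraction_X(\alpha\xi)$ is the exact Cayley retraction. Each term is handled separately. The first term is controlled by \Cref{thm:fp_convergence}: $\|Y_s - Y^\star\|\le \rho^s\|Y_0-Y^\star\|$. To make this an $\cO(\rho^s)$ term with a constant independent of $\alpha$ and $s$, we bound $\|Y_0 - Y^\star\|$ uniformly for small $\alpha$. Both $Y_0=(I+\alpha\Omega)X$ and $Y^\star$ are bounded: $Y^\star$ has orthonormal columns, and $\|Y_0\|\le (1+\alpha\|\Omega\|_2)\|X\|$. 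Alternatively, since $Y_0 - Y^\star = \cO(\alpha^2)$ (see below), the first term is even $\cO(\rho^s\alpha^2)$, which is stronger than needed.

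The second term is the standard second-order expansion of the Cayley transform. We restrict to $\alpha\|\Omega\|_2<2$, which is the regime of \Cref{lem:fp_contraction}. A Neumann series then gives
\[
\left(I-\tfrac{\alpha}{2}\Omega\right)^{-1}\left(I+\tfrac{\alpha}{2}\Omega\right) = I + \alpha\Omega + \tfrac{\alpha^2}{2}\Omega^2 + \cO(\alpha^3),
\]
so $Y^\star = X + \alpha\Omega X + \cO(\alpha^2)$. The remaining task is to check that $\Omega X = \xi$ for $\xi\in\T_X\Stnr$, with $\Omega = \widehat\Omega - \widehat\Omega\tr$ and $\widehat\Omega = \xi X\tr - \tfrac12 X X\tr\xi X\tr$. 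Expanding with $X\tr X=I$ gives
\[
\Omega X = \xi - \tfrac12 X X\tr\xi - X\xi\tr X + \tfrac12 X\xi\tr X .
\]
The tangency condition $X\tr\xi = -\xi\tr X$ turns $-\tfrac12 X\xi\tr X$ into $\tfrac12 XX\tr\xi$, which cancels the other term and leaves $\Omega X=\xi$. This is exactly the first-order condition $\tfrac{d}{d\alpha}Y\big|_{0}=\Omega X$ recalled in \Cref{app:cayley}. The constant in $\cO(\alpha^2)$ depends only on $\|\Omega\|_2$, which is bounded by a multiple of $\|\xi\|$.

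Convergence as $s\to\infty$ follows directly from \Cref{thm:fp_convergence}, since $\rho<1$. Adding the two bounds gives $\widetilde{\Retraction}_X(\alpha\xi) = X+\alpha\xi+\cO(\alpha^2)+\cO(\rho^s)$.

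The main obstacle is bookkeeping of uniformity rather than any deep step. The $\cO(\alpha^2)$ constant must not depend on $s$, and the $\cO(\rho^s)$ constant must not blow up as $\alpha\to0$. Two side points also need care. First, $\rho$ itself depends on $\alpha$, so ``sufficiently small $\alpha$'' should be fixed as, say, $\alpha\|\Omega\|_2\le 1$. Second, the fixed-point map \eqref{eq:cayley_fixed_point_map} must be checked to share its fixed point with the closed form \eqref{eq:exact_cayley_transform}. This follows by rearranging $(I-\tfrac\alpha2\Omega)Y = (I+\tfrac\alpha2\Omega)X$.
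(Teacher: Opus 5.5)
Your proposal is correct and follows essentially the same route as the paper: the contraction estimate $\|Y_s - Y^\star\| \le \rho^s \|Y_0 - Y^\star\|$ is combined with the second-order expansion of the exact Cayley transform and the observation that $Y_0 = X + \alpha\xi$, which gives $\|Y_0 - Y^\star\| = \cO(\alpha^2)$. You go slightly further than the paper by checking $\Omega X = \xi$ from the tangency condition, justifying the expansion with a Neumann series, and making the uniformity in $\alpha$ and $s$ explicit, all of which the paper leaves implicit.
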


In particular, for fixed $s$ and sufficiently small step size $\alpha$, the approximation error remains of higher order and does not dominate the first-order behavior of the update. This justifies using a finite number of fixed-point iterations in practice.

\begin{proof}
    The exact Cayley transform~\eqref{eq:exact_cayley_transform} admits the expansion
    \begin{equation}\label{eq:expansion_exact_retraction}
        \Retraction_{X}(\alpha \xi) = X + \alpha \xi + \cO(\alpha^{2}).        
    \end{equation}
    The initialization of the fixed-point iteration method is
    \[
        Y_{0} = (I + \alpha \Omega) X = X + \alpha \Omega X = X + \alpha \xi,
    \]
    which is clearly a first-order approximation of $ \Retraction_{X}(\alpha \xi) $. Therefore
    \begin{equation}\label{eq:initialization_minus_exact}
        \| Y_{0} - Y^{\star} \| = \cO(\alpha^{2}).
    \end{equation}
    By Theorem~\ref{thm:fp_convergence}, we have the contraction estimate \eqref{eq:banach_fp_result}. Inserting \eqref{eq:initialization_minus_exact} into the~\eqref{eq:banach_fp_result}, with $ i = s $, we obtain
    \[
        \| Y_{s} - Y^{\star} \| \leq \rho^{s} \, \cO(\alpha^{2}).
    \]
    Since $ \alpha $ is fixed within one update, the factor $ \cO(\alpha^{2}) $ can be absorbed into the constant, i.e.,
    \[
        \| Y_{s} - Y^{\star} \| \leq \cO(\rho^{s}).
    \]
    or, equivalently, $ Y_{s} = \Retraction_{X}(\alpha \xi) + \cO(\rho^{s}) $. Combining this with the expansion of the exact retraction~\eqref{eq:expansion_exact_retraction}, the result of the lemma follows immediately.
\end{proof}

In particular, Lemma~\ref{lem:approx_retraction} shows that the approximate Cayley mapping satisfies the first-order retraction conditions up to a controllable error. Such inexact retractions preserve the convergence guarantees of Riemannian first-order methods provided the approximation error is sufficiently small; see, e.g., standard analyses of inexact retraction schemes.

Lemma~\ref{lem:approx_retraction} shows that for any fixed $s$ and sufficiently small $\alpha$, the approximate Cayley transform defined by the fixed-point iteration still satisfies the first-order retraction condition. The term $ \cO(\rho^{s}) $ is the numerical approximation error, controlled by the number of iterations $s$ of the fixed-point approximation method. In particular, for a fixed $ \alpha $ and large enough $ s $, the approximate Cayley transform converges to the exact Cayley map.

\section{Proof of \texorpdfstring{\Cref{thm:convex_regretbound}}{Theorem 4.2}}\label[appendix]{app:regret_proof}
\begin{proof}
    Consider the sequence $\cL_t$ of convex loss functions and define
    \[
        R(T) \coloneqq \sum_{t=1}^T \cL_t(B_t,A_t)-\cL_t(B^*,A^*),
    \]
    where $(B_t,A_t)$ is the iteration in \Cref{algo:StiefelAdamW_BA}, and $(B^*,A^*)$ is a minimizer of $\sum_{t=1}^T\cL_t(B,A)$. Using the convexity of $\cL_t$, and defining $G_t^{A,B} = \nabla_{A,B} \cL_t(B_t,A_t)$, we get
    \begin{equation}\label{eq:regret_expansion}
        R(T) \leq \sum_{t=1}^T \langle B_t-B^*,G_t^B \rangle + \sum_{t=1}^T \langle A_t-A^*,G_t^A \rangle.
    \end{equation}
    We now bound the first term on the right-hand side of \eqref{eq:regret_expansion}, i.e., the Euclidean term $\langle B_t - B^*, G_t^B \rangle$. Let $H_t^B \colon \R^{m \times r} \to \R^{m\times r}$ the linear operator defined by $H_t^B(X) = (V_t^B+\varepsilon) \,\odot X$. We bound the norm $\|B_{t+1}-B^* \|_{(H_t^B)^{1/2}}^2$ (see \Cref{def:operator_induced_norm}):
    \begin{align}\label{eq:bound_iterate_norm_B}
        \|B_{t+1}-B^* \|_{(H_t^B)^{1/2}}^2 =& \| B_t - \eta_t (H_t^B)^{-1/2}M_t^B - B^* \|_{(H_t^B)^{1/2}}^2 = \|B_t-B^* \|_{(H_t^B)^{1/2}}^2 + \\ &+ \nonumber\eta_t^2 \|(H_t^B)^{-1/2}M_t^B \|_{(H_t^B)^{1/2}}^2  -2 \eta_t \langle B_t-B^*, (H_t^B)^{-1/2}M_t^B\rangle_{(H_t^B)^{1/2}} \\ \nonumber
        \underset{\text{\Cref{lemma:operator_induced_norm}}}{=}&   \|B_t-B^* \|_{(H_t^B)^{1/2}}^2 + \eta_t^2 \|M_t^B \|_{(H_t^B)^{-1/2}}^2 - 2 \eta_t \langle B_t-B^*,  M_t^B \rangle,
    \end{align}
    where the last inner product is in the Frobenius norm (for clarity, we always omit the subscript).
    By rearranging \eqref{eq:bound_iterate_norm_B} (bringing the inner product on the left-hand side and the norm on the right-hand side), we get:
    \begin{align}\label{eq:bound_inner_prod_B}
        2 \eta_t \langle B_t-B^*,  M_t^B \rangle = \underbrace{\|B_t-B^* \|_{(H_t^B)^{1/2}}^2 -\|B_{t+1}-B^* \|_{(H_t^B)^{1/2}}^2}_{\eqqcolon \Delta^{B}_{t}} +\eta_t^2 \|M_t^B \|_{(H_t^B)^{-1/2}}^2.
    \end{align}
    Using the definition of $M_t^B = \beta_1 M_{t-1}^B + (1-\beta_1)G_t^B$ in \eqref{eq:bound_inner_prod_B}, and by defining $\Delta_t^B\coloneqq\|B_t-B^* \|_{(H_t^B)^{1/2}}^2 -\|B_{t+1}-B^* \|_{(H_t^B)^{1/2}}^2$, we get 
    \begin{align}\label{eq:bound_inner_prod_B_grad_final}
         \langle B_t-B^*,  G_t^B \rangle =& \frac{1}{2(1-\beta_1)\eta_t} \Delta_t^B +\frac{\eta_t}{2(1-\beta_1)} \|M_t^B \|_{(H_t^B)^{-1/2}}^2 - \frac{\beta_1}{1-\beta_1} \langle B_t-B^*,  M_{t-1}^B \rangle \\ \nonumber\leq &\frac{1}{2(1-\beta_1)\eta_t} \Delta_t^B +\frac{\eta_t}{2(1-\beta_1)} \|M_t^B \|_{(H_t^B)^{-1/2}}^2 + \frac{\beta_1}{1-\beta_1} \left\lvert \langle B_t-B^*,  M_{t-1}^B \rangle \right\rvert \\ \underset{\text{\Cref{lemma:young_inequality}}}{\leq} & \nonumber \frac{\Delta_t^B}{2(1-\beta_1)\eta_t}  +\frac{\eta_t}{2(1-\beta_1)} \|M_t^B \|_{(H_t^B)^{-1/2}}^2 + \frac{\beta_1}{2(1-\beta_1)\alpha_t^2}\|M_{t-1}^B \|_{(H_t^B)^{-1/2}}^2 \\ +& \nonumber \frac{\beta_1 \alpha_t^2}{2(1-\beta_1)}\|B_t-B^* \|_{(H_t^B)^{1/2}}^2.
    \end{align}
    Apart from the retraction, the term $\langle A_t-A^*,G_t^A\rangle$ is similar to \eqref{eq:bound_inner_prod_B_grad_final}, i.e.,
    \[
        A_{t+1} = \Retraction_{A_t}\Bigl(- \eta_t \Proj_{A_t} \! \left( (H_t^A)^{-1/2}M_t^{A} \right) \Bigr).
    \]
    We define 
    \[
        E_t\coloneqq\Retraction_{A_t}(-\eta_t \Proj_{A_t}(H_t^A)^{-1/2}M_t^A) - (A_t - \eta_t \Proj_{A_t}(H_t^A)^{-1/2}M_t^A),
    \]
    and the invertible linear operator $\Gamma_t = \Proj_{A_t} (H_t^A)^{-1/2} \Proj_{A_t} \colon \T_{A_t} \Stnr \to \T_{A_t} \Stnr$. We define $\bar \Gamma_t$ as an extension of the previous map on the whole space, $\bar \Gamma_t = \Gamma_t + \gamma_t(I-\Proj_{A_t})$, where $\gamma_t>0$ is a scalar. With a small abuse of notation, we will still denote by $\Gamma_t$ the map $\bar \Gamma_t$ when there is no risk of confusion. Let $ D_t \coloneqq \Proj_{A_t} (H_t^A)^{-1/2} M_t^A \in \T_{A_t}\Stnr $, and consider the norm
    \begin{equation*}
        \adjustbox{max width=\linewidth}{$\displaystyle
            \begin{aligned}
                \|A_{t+1}-A^* \|_{\Gamma_t^{-1}}^2 &= \|A_{t} -\eta_t D_t + E_t-A^* \|_{\Gamma_t^{-1}}^2 \\
                &= \|A_{t} -A^* \|_{\Gamma_t^{-1}}^2-2\eta_t \langle \Gamma_t^{-1} D_t,A_t-A^* \rangle + \eta_t^2\| D_t\|_{\Gamma_t^{-1}}^2 \\
                &\quad -2\eta_t \langle E_t, \Gamma_t^{-1}D_t \rangle + 2 \langle \Gamma_{t}^{-1} E_t,A_t-A^* \rangle + \| E_t \|_{\Gamma_t^{-1}}^2 \\
                \underset{\langle E_t, A_t-A^* \rangle_{\Gamma_{t}^{-1}} \leq 0}{\leq} & \|A_{t} -A^* \|_{\Gamma_t^{-1}}^2-2\eta_t \langle \Gamma_t^{-1} D_t,A_t-A^* \rangle+ \eta_t^2\| D_t\|_{\Gamma_t^{-1}}^2 + \underset{\eqqcolon\delta_t}{\underbrace{-2\eta_t \langle E_t, \Gamma_t^{-1}D_t \rangle + \|E_t \|_{\Gamma_t^{-1}}^2}}.
            \end{aligned}
        $}
    \end{equation*}
    namely,
    \begin{equation}\label{eq:A_norm_expansion}
        \| A_{t+1} - A^{\ast} \|_{\Gamma_{t}^{-1}}^{2} \leq \| A_{t} - A^{\ast} \|_{\Gamma_{t}^{-1}}^{2} - 2 \eta_{t} \left\langle \Gamma_{t}^{-1} D_{t}, A_{t} - A^{\ast} \right\rangle + \eta_{t}^{2} \| D_{t} \|_{\Gamma_{t}^{-1}}^{2} + \delta_{t}.
    \end{equation}
    Bringing the inner product in~\eqref{eq:A_norm_expansion} to the left-hand side and the norm to the right-hand side leads to
    \[
        2 \eta_{t} \left\langle \Gamma_{t}^{-1} D_{t}, A_{t} - A^{\ast} \right\rangle \leq \| A_{t} - A^{\ast} \|_{\Gamma_{t}^{-1}}^{2} - \| A_{t+1} - A^{\ast} \|_{\Gamma_{t}^{-1}}^{2} + \eta_{t}^{2} \| D_{t} \|_{\Gamma_{t}^{-1}}^{2} + \delta_{t}.
    \]
    Defining $ \Delta_t^A\coloneqq \|A_t-A^* \|_{\Gamma_t^{-1}}^2-\|A_{t+1}-A^* \|_{\Gamma_t^{-1}}^2 $, and dividing by $ 2\eta_{t} $, we can write
    \[
        \langle \Gamma_t^{-1} D_t,A_t-A^* \rangle \leq \frac{1}{2\eta_t} \Delta_{t}^{A} + \frac{\eta_t}{2}\|D_t \|_{\Gamma_t^{-1}}^2+ \frac{1}{2\eta_t}\delta_{t}.
    \]
    We now notice that, by definition of $\Gamma_t$, we have $\Gamma_t^{-1} D_t= M_t^A$, and, by using again Young inequality (\Cref{lemma:young_inequality}), we get
    \begin{equation}\label{eq:bound_inner_prod_A_grad_final}
        \adjustbox{max width=.91\linewidth}{$\displaystyle
            \begin{aligned}
                \langle G_t^A,A_t-A^* \rangle &\leq \frac{1}{2(1-\beta_1)\eta_t}\Delta_t^A + \frac{\eta_t}{2(1-\beta_1)}\|D_t \|_{\Gamma_t^{-1}}^2 + \frac{1}{2(1-\beta_1)\eta_t}\delta_t \\
                &\quad + \frac{\beta_1 \alpha_t^2}{2(1-\beta_1)} \|A_t-A^{\ast} \|_{\mathrm{F}}^2 + \frac{\beta_1}{2(1-\beta_1) \alpha_t^2} \|M_{t-1}^A \|_{\mathrm{F}}^2 \\
                &= \frac{1}{2(1-\beta_1)\eta_t}\Delta_t^A + \frac{\eta_t}{2(1-\beta_1)} \left\|(H_t^{A})^{-1/4}(H_t^A)^{1/4}\Gamma_t^{1/2}\Gamma_t^{-1}D_t \right\|_{\mathrm{F}}^2 + \frac{1}{2(1-\beta_1)\eta_t}\delta_t \\
                &\quad + \frac{\beta_1 \alpha_t^2}{2(1-\beta_1)} \left\|(H_t^{A})^{-1/4}(H_t^A)^{1/4}(A_t-A^{\ast}) \right\|_{\mathrm{F}}^2 + \frac{\beta_1}{2(1-\beta_1) \alpha_t^2} \|(H_t^{A})^{-1/4}(H_t^A)^{1/4}M_{t-1}^A \|_{\mathrm{F}}^2 \\
                &= \frac{1}{2(1-\beta_1)\eta_t}\Delta_t^A + \frac{\eta_t}{2(1-\beta_1)}\|(H_t^A)^{1/4}\Gamma_t^{1/2}M_t^A\|_{(H_t^{A})^{-1/2}}^2 + \frac{1}{2(1-\beta_1)\eta_t}\delta_t \\
                &\quad + \frac{\beta_1 \alpha_t^2}{2(1-\beta_1)} \|(H_t^A)^{1/4}(A_t-A^{\ast}) \|_{(H_t^{A})^{-1/2}}^2 + \frac{\beta_1}{2(1-\beta_1) \alpha_t^2} \|(H_t^{A})^{-1/4}M_{t-1}^A \|_{(H_t^{A})^{1/2}}^2 \\
                &\leq \frac{\Delta_t^A}{2(1-\beta_1)\eta_t} + \frac{\eta_{t} \|(H_t^A)^{1/4}\Gamma_t^{1/2} \|_{\mathrm{op}}^2}{2(1-\beta_1)}\|M_t^A\|_{(H_t^{A})^{-1/2}}^2 + \frac{\delta_t}{2(1-\beta_1)\eta_t} \\
                &\quad + \frac{\beta_1 \alpha_t^2 \|(H_t^A)^{1/4}\|_{\mathrm{op}}^2}{2(1-\beta_1)} \| A_t - A^{\ast} \|_{(H_t^{A})^{-1/2}}^2 + \frac{\beta_1 \|(H_t^{A})^{-1/4} \|_{\mathrm{op}}^2}{2(1-\beta_1) \alpha_t^2} \|M_{t-1}^A \|_{(H_t^{A})^{1/2}}^2.
            \end{aligned}
        $}
    \end{equation}
    The term $\delta_t$ is the only one structurally different from the ones in \eqref{eq:bound_inner_prod_B_grad_final}. Thus, \eqref{eq:bound_inner_prod_A_grad_final} can be bounded using the definition of $E_t$ (retraction error) with Lagrange remainder error
    \[
        \|E_t \| \leq C\| \D^2 \Retraction_{A_t}(\zeta) \| \|\eta_t D_t \|^2 \leq \tilde C \eta_t^2 \|D_t \|^2,
    \]
    as
    \begin{align*}
        \frac{\delta_t}{2(1-\beta_1)\eta_t} &\leq \frac{1}{2(1-\beta_1)\eta_t}\Big[2 \eta_t \|E_t \|\|M_t^A \| + \| \Gamma_t^{-1}\|_{\mathrm{op}} \|E_t \|^2\Big] \\
        &\leq \frac{1}{2(1-\beta_1)}\Big[2  \|E_t \|\|M_t^A \| + \frac{1}{\eta_t}\| \Gamma_t^{-1}\|_{\mathrm{op}} \|E_t \|^2\Big] \\
        &\leq \frac{1}{2(1-\beta_1)}\Big[2 \eta_t^2  \|D_t \|^{2} \|M_t^A \| + \eta_t^3\| \Gamma_t^{-1}\|_{\mathrm{op}} \|D_t \|^4\Big].
    \end{align*}
    Using the definition
    \[
        \| M_t^A \| \coloneqq \|(1-\beta_1) \sum_{s=1}^t\beta_1^{t-s} G_s^{A}\| \leq G_{\infty},
    \]
    and the fact that $\|D_t \| \lesssim G_{\infty}$, $\|\Gamma_t^{-1}\|_{\mathrm{op}}$ bounded, we get
    \begin{equation}\label{eq:bound_delta}
        \sum_{t=1}^T \frac{\delta_t}{2(1-\beta_1)\eta_t} \lesssim \frac{1}{2(1-\beta_1)}\Big[ G_{\infty}^{3} \sum_{t=1}^T \eta_t^2 + G_{\infty}^{4} \sum_{t=1}^T \eta_t^3\Big].
    \end{equation}
    By combining equations \eqref{eq:regret_expansion}, \eqref{eq:bound_inner_prod_B_grad_final}, \eqref{eq:bound_inner_prod_A_grad_final} and \eqref{eq:bound_delta}, we get
    \begin{equation}\label{eq:final_regret_bound_full}
        \adjustbox{max width=.91\linewidth}{$\displaystyle
            \begin{aligned}
                R(T) &\leq \sum_{t=1}^T \Biggl( \frac{\Delta_t^B}{2(1-\beta_1)\eta_t} + \frac{\eta_t \|M_t^B \|_{(H_t^B)^{-1/2}}^2}{2(1-\beta_1)} + \frac{\beta_1 \|M_{t-1}^B \|_{(H_t^B)^{-1/2}}^2}{2(1-\beta_1)\alpha_t^2} + \frac{\beta_1 \alpha_t^2 \|B_t-B^* \|_{(H_t^B)^{1/2}}^2}{2(1-\beta_1)} \Biggr) \\
                &\quad + \sum_{t=1}^T \Biggl( \frac{\Delta_t^A}{2(1-\beta_1)\eta_t} + \frac{\eta_t K_1 \|M_t^A \|_{(H_t^A)^{-1/2}}^2}{2(1-\beta_1)} + \frac{\beta_1 K_2 \|M_{t-1}^A \|_{(H_t^A)^{1/2}}^2}{2(1-\beta_1)\alpha_t^2} + \frac{\beta_1 \alpha_t^2 K_3 \|A_t-A^* \|_{(H_t^A)^{-1/2}}^2}{2(1-\beta_1)} \Biggr) \\
                &\quad + \frac{1}{2(1-\beta_1)} \Biggl[ G_{\infty}^{3} \sum_{t=1}^T \eta_t^2 + G_{\infty}^{4} \sum_{t=1}^T \eta_t^3 \Biggr],
            \end{aligned}
        $}
    \end{equation}
    where $K_1,K_2,K_3$ are three constants bounding the operator norms in \eqref{eq:bound_inner_prod_A_grad_final}. Notice that up to these constants, the terms in $A$ and $B$ are similar to each other; therefore, we focus on bounding the terms in $B$ (the ones in $A$ yield analogous bounds).
    
    Now, we observe that, thanks to \citep[Lemma 2]{convergence_adam_beyond}, for $\eta_t = \eta/\sqrt{t}$ we obtain the following bounds
    \[
        \sum_{t=1}^T \eta_t \|M_t^B \|_{(H_t^B)^{-1/2}}^2 \leq \frac{\eta \|G^B \|_{L^1L^2([0,T])} \sqrt{1 + \log T}}{(1-\beta_1)(1-\beta_1/\sqrt{\beta_2})\sqrt{1-\beta_2}},
    \]
    where $ \|G^B \|_{L^1 L^2([0,T])}$ is the $L^1-L^2$ norm defined by
    \[
        \|G^B \|_{L^1L^2([0,T])}\coloneqq \sum_{i,j} \left( \sum_{t=1}^T \left|(G_t^B)_{ij}\right|^2 \right)^{1/2}.
    \]
    Moreover,
    \[
        \sum_{t=1}^T \eta_t \|M_t^A \|^2_{(H_t^A)^{-1/2}} \leq \frac{\eta \|G^A \|_{L^1L^2([0,T])} \sqrt{1+\log T}}{(1-\beta_1)(1-\beta_1/\sqrt{\beta_2})\sqrt{1-\beta_2}},
    \]
    where
    \begin{small}
        \begin{equation}\label{eq:bound_momentum_series}
            \sum_{t=1}^T \eta_t^2 = \eta^{2} \sum_{t=1}^T \frac{1}{t} \leq \eta^{2} (1 + \log T),
            \quad \text{and} \quad
            \sum_{t=1}^T \eta_t^3 = \sum_{t=1}^T \frac{\eta}{t^{3/2}} \leq 1+\int_{1}^T t^{-3/2} \, \mathrm{d}t = 3 - \frac{2}{\sqrt{T}}.        
    \end{equation}
    \end{small}
        
    Similarly, we can bound the third term on the right-hand side of~\eqref{eq:final_regret_bound_full}, i.e.,
    \begin{align}\label{eq:bound_time_diff}
        \| M_{t-1}^B \|_{(H_t^B)^{-1/2}}^2 &= \left\|(H_t^B)^{-1/4}(H_{t-1}^B)^{1/4}(H_{t-1}^B)^{-1/4}(M_{t-1}^B) \right\|_{\mathrm{F}}^2 \nonumber \\
        & \leq \left\|(H_t^B)^{-1/4}(H_{t-1}^B)^{1/4} \right\|_{2 \to 2}^{2} \|M_{t-1}^B \|_{(H_{t-1}^B)^{-1/2}}^2 \nonumber \\
        & \leq \Bigl(\frac{1}{\beta_2}\Bigr)^{1/4} \|M_{t-1}^B \|_{(H_{t-1}^B)^{-1/2}}^2,
    \end{align}
    where the last inequality is given by the fact that 
    \begin{align*}
        \left\| (H_t^B)^{-1/4}(H_{t-1}^B)^{1/4} \right\|_{2 \to 2}^4 =& \left\| \Bigl(\frac{V_{t-1}^B}{V_t^B}\Bigr)^{1/4} \right\|_{L^{\infty}}^4 = \left\| \frac{V_{t-1}^B}{\beta_2 V_{t-1}^B + (1-\beta_2)\, (G_t^B)^{\circ 2}} \right\|_{L^{\infty}} \\
        =& \nonumber\left\| 1+\frac{(1-\beta_2)V_{t-1}^B}{\beta_2 V_{t-1}^B + (1-\beta_2) \, (G_t^B)^{\circ 2}} \right\|_{L^{\infty}} \leq 1+ \frac{1-\beta_2}{\beta_2}.
    \end{align*}
    The result in \eqref{eq:bound_time_diff} allows to apply \citep[Lemma 2]{convergence_adam_beyond} (or, equivalently, using \eqref{eq:bound_momentum_series}) also on the time-shifted term $\|M_{t-1}^B \|_{(H_{t}^B)^{-1/2}}$, which for $1/\alpha_t^2 =\alpha/\sqrt{t}$ gives
    \begin{align}\label{eq:bound_shifted_momentum_B}
        \sum_{t=1}^T \frac{\|M_{t-1}^B \|_{(H_{t}^B)^{-1/2}}^2}{\alpha_t^2} &\leq \sum_{t=1}^T \frac{\alpha}{\sqrt t}\Bigl(\frac{1}{\beta_2}\Bigr)^{1/4}\|M_{t-1}^B \|_{(H_{t-1}^B)^{-1/2}}^2 \nonumber \\
        &\leq \frac{\alpha \|G^B \|_{L^1L^2([0,T])} \sqrt{1+\log T}}{\beta_2^{1/4}(1-\beta_1)(1-\beta_1/\sqrt{\beta_2})\sqrt{1-\beta_2}}.
    \end{align}
    Using assumptions (H2) that $ \beta_{1,t} = \beta_1 b^t $, and (H3) that $\sup_{t} \|B_t -B^*\|_{\max} \leq D_{\infty}$ (the same quantity can be bounded by $2$ for the term in $A$, given that Stiefel is compact and has finite diameter), together with the last bound from \citep[Theorem 4]{convergence_adam_beyond}, and combining the results from \eqref{eq:bound_momentum_series}, \eqref{eq:bound_shifted_momentum_B},\eqref{eq:final_regret_bound_full} (for $A_t$ they are similar), we get
    \begin{equation}\label{eq:bound_regret_reduced}
        \begin{split}
            R(T) &\leq \frac{D_{\infty}^2 \bigl\|(H_T^B)^{1/4}\bigr\|_{2\to 2}^2}{2\eta_T(1-\beta_1)} + \frac{D_{\infty}^2}{2(1-\beta_1)} \sum_{t=1}^T \beta_{1,t}\alpha_t^2 \bigl\|(H_t^B)^{1/4}\bigr\|_{2\to 2}^2 \\
            &\quad + \frac{\mathrm{diam}_{\infty}(\Stnr)^2 \bigl\|(H_T^A)^{1/4}\bigr\|_{2\to 2}^2}{2\eta_T(1-\beta_1)} + \frac{\mathrm{diam}_{\infty}(\Stnr)^2}{2(1-\beta_1)} \sum_{t=1}^T \beta_{1,t}\alpha_t^2 \bigl\|(H_t^A)^{1/4}\bigr\|_{2\to 2}^2 \\
            &\quad + \frac{(\eta+2\alpha\beta_1^{3/4})\sqrt{1+\log T}}{2(1-\beta_1)^2(1-\beta_1/\sqrt{\beta_2})\sqrt{1-\beta_2}} \bigl\|G^A\bigr\|_{L^1L^2([0,T])} \\
            &\quad + \frac{(\eta+2\alpha\beta_1^{3/4})\sqrt{1+\log T}}{2(1-\beta_1)^2(1-\beta_1/\sqrt{\beta_2})\sqrt{1-\beta_2}} \bigl\|G^B\bigr\|_{L^1L^2([0,T])} \\
            &\quad + \frac{1}{2(1-\beta_1)} \left[ G_\infty^3 \eta^2 (1+\log T) + G_\infty^4 \eta^3 \left(3-\frac{2}{\sqrt{T}}\right) \right].
        \end{split}
        \raisetag{3\baselineskip}
    \end{equation}
    To conclude, we bound the two terms $\sum_{t=1}^T \beta_{1,t} \alpha_t^2 \|(H_t^B)^{1/4}\|_{2\to 2}^2$ uniformly in $t$, for the choice $\beta_{1,t} = \beta_1 b^t$, and $\alpha_t^2 = \sqrt{t}/{\alpha}$, we get the upper bound
    \begin{align*}
        \sum_{t=1}^T \beta_{1,t} \alpha_t^2 \|(H_t^B)^{1/4}\|_{2\to 2}^2 &\leq \frac{\beta_1}{\alpha}\| \| (H^B)^{1/4}\|_{2 \to 2}^2 \|_{L^\infty_t} \sum_{t=1}^T \sqrt{t} b^t \\
        &\underset{\substack{\text{Cauchy--Schwarz} \\ \sqrt{t}b^{t/2},~b^{t/2}}}{\leq} \frac{\beta_1}{\alpha}\| \| (H^B)^{1/4}\|_{2 \to 2}^2 \|_{L^\infty_t} \left(\sum_{t=1}^T t b^t\right)^{1/2} \left(\sum_{t=1}^T b^{t/2}\right)^{1/2} \\
        &\leq \frac{\beta_1}{\alpha}\| \| (H^B)^{1/4}\|_{2 \to 2}^2 \|_{L^\infty_t} \frac{b}{ (1-b)^{3/2}}.
    \end{align*}
    This gives the final bound
    \begin{equation*}
        \begin{aligned}
            R(T) &\lesssim \frac{D_{\infty}^2 \|(H_T^B)^{1/4} \|_{2 \to 2}^2}{2\eta_T(1-\beta_1)} + \frac{D_{\infty}^2}{2(1-\beta_1)}\frac{\beta_1}{\alpha}\| \| (H^B)^{1/4}\|_{2 \to 2}^2 \|_{L^\infty_t}\frac{b}{(1-b)^{3/2}} \\
            &\quad + \frac{(\eta+2\alpha \beta_1^{3/4}) \sqrt{1+\log T}}{2(1-\beta_1)^2(1-\beta_1/\sqrt{\beta_2})\sqrt{1-\beta_2}} \|G^B \|_{L^1L^2([0,T])} + \frac{\mathrm{diam}_{\infty}(\Stnr)^2 \|(H_T^A)^{1/4} \|_{2 \to 2}^2}{2\eta_T(1-\beta_1)} \\
            &\quad + \frac{\mathrm{diam}_{\infty}(\Stnr)^2}{2(1-\beta_1)}\frac{\beta_1}{\alpha}\| \| (H^A)^{1/4}\|_{2 \to 2}^2 \|_{L^\infty_t} \frac{b}{(1-b)^{3/2}} \\
            &\quad + \frac{(\eta+2\alpha \beta_1^{3/4}) \sqrt{1+\log T}}{2(1-\beta_1)^2(1-\beta_1/\sqrt{\beta_2})\sqrt{1-\beta_2}} \|G^A \|_{L^1L^2([0,T])} \\
            &\quad + \frac{1}{2(1-\beta_1)} \left[ G_\infty^{3} \eta^{2} (1+\log T) + G_{\infty}^{4} \eta^{3} \left(3 - \frac{2}{\sqrt{T}}\right) \right].
        \end{aligned}
    \end{equation*}
    By using the fact that $\eta_T^{-1}$, we can collect the constant $C_1$ of the terms of order $1$, the constant $C_2$ of order $\sqrt{T}$, the constant $C_3$ of the terms of order $\sqrt{1+\log T}$, and $C_4$ of the terms of order $\log T$ and with $C_5$ the constant of the term of order $T^{-1/2}$ to get the final bound. In particular, the order of the bound is
    \[
        R(T) \leq C_1\sqrt{T}+ C_2+ C_3\sqrt{1+\log T}+ C_4\log T + C_5 \, T^{-1/2}.
    \]
\end{proof}

\begin{definition}($H$-norm induced by a full-rank operator)
\label[definition]{def:operator_induced_norm}
    Consider a self-adjoint positive definite operator $H \colon V \to V$ defined on a finite-dimensional real Hilbert space $(V,g)$. We define the $H$-weighted inner product as
    \[
        g_H(v,w) \coloneqq g(H^{1/2} v,H^{1/2}w),
    \]
    and denote the corresponding norm as
    \[
        \| x \|_{H}= \|H^{1/2} x\|_{g}.
    \]
\end{definition}

\begin{lemma}(Properties of $H$-induced inner products)
\label[lemma]{lemma:operator_induced_norm}
    Let $H$ and $V$ be as in \Cref{def:operator_induced_norm}. Then the following holds:
    \begin{itemize}
        \item $g_{H^\gamma}(H^\alpha v,H^\beta w) = g_{H^\gamma}(v,H^{\alpha+\beta}w) = g_{H^{\alpha + \beta + \gamma}}(v,w)$,
        \item $\| H^\alpha x \|_{H^\gamma} = \|H^{\alpha+\gamma/2} x\|_g = \|x \|_{H^{2\alpha + \gamma}}$.
    \end{itemize}
\end{lemma}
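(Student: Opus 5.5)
The statement to prove is the last one shown, Lemma (Properties of $H$-induced inner products). The plan is to reduce everything to the spectral calculus of the self-adjoint positive definite operator $H$ on the finite-dimensional space $(V,g)$. The key tool is the spectral theorem. There is a $g$-orthonormal basis $\{e_i\}$ of eigenvectors with $He_i=\lambda_i e_i$ and $\lambda_i>0$. For real $\alpha$ we define $H^\alpha e_i=\lambda_i^\alpha e_i$. From this definition we read off three facts:
\begin{itemize}
    \item every $H^\alpha$ is self-adjoint and positive definite with respect to $g$;
    \item all powers commute and satisfy $H^\alpha H^\beta=H^{\alpha+\beta}$;
    \item $(H^\gamma)^{1/2}=H^{\gamma/2}$, since $H^{\gamma/2}$ is the unique positive square root of $H^\gamma$.
\end{itemize}
With these, \Cref{def:operator_induced_norm} applied to $H^\gamma$ gives $g_{H^\gamma}(v,w)=g(H^{\gamma/2}v,H^{\gamma/2}w)=g(v,H^\gamma w)$, using self-adjointness.

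For the first item, we expand
\[
g_{H^\gamma}(H^\alpha v,H^\beta w)=g(H^\alpha v,H^{\gamma}H^\beta w)=g(v,H^{\alpha+\beta+\gamma}w),
\]
where we move $H^\alpha$ across $g$ by self-adjointness and then combine exponents. Reading the same quantity as $g(v,H^{\gamma}H^{\alpha+\beta}w)=g_{H^\gamma}(v,H^{\alpha+\beta}w)$ gives the middle expression. Reading it as $g_{H^{\alpha+\beta+\gamma}}(v,w)$ gives the last one.

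For the second item, we have $\|H^\alpha x\|_{H^\gamma}=\|H^{\gamma/2}H^\alpha x\|_g=\|H^{\alpha+\gamma/2}x\|_g$. This equals $\|x\|_{H^{2\alpha+\gamma}}$, because $(H^{2\alpha+\gamma})^{1/2}=H^{\alpha+\gamma/2}$. Alternatively, the second item is simply the case $v=w=x$, $\beta=\alpha$ of the first item.

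There is no real obstacle here. The only point needing care is that $H^{\alpha}$ must be well defined for negative and fractional exponents, and that the square root in the definition coincides with the spectral power. Both follow from positive definiteness, which ensures every $\lambda_i>0$, together with uniqueness of the positive square root.
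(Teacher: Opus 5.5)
Your proposal is correct and follows essentially the same route as the paper: a direct computation from the definition $g_{H^\gamma}(v,w)=g(H^{\gamma/2}v,H^{\gamma/2}w)$, using self-adjointness of the powers $H^\alpha$ and the exponent law $H^\alpha H^\beta=H^{\alpha+\beta}$. The one addition is that you justify these functional-calculus facts, including $(H^\gamma)^{1/2}=H^{\gamma/2}$ for arbitrary real exponents, via the spectral theorem, whereas the paper simply takes them for granted.
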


\begin{proof}
    The first point follows from the definition, self-adjointness $g(Hv,w) = g(v,Hw)$ (which holds for any power), and the fact that powers commute ($H^\alpha H^\beta = H^\beta H^\alpha$ for all $\alpha,\beta$):
    \begin{align*}
        g_{H^\gamma}(H^\alpha v,H^\beta w) =& g(H^{\alpha + \gamma/2}v,H^{\beta + \gamma/2}w) = g(v,H^{\alpha+\beta+\gamma}w) = g(H^{(\alpha+\beta+\gamma)/2}v,H^{(\alpha+\beta+\gamma)/2}w) \\
        =& g_{H^{\alpha+\beta+\gamma}}(v,w).
    \end{align*}
    The first equality is similar
    \begin{align*}
        g_{H^\gamma}(H^\alpha v,H^\beta w) =& g(H^{\alpha }H^{\gamma/2}v,H^{\beta} H^{ \gamma/2}w)  = g(H^{\gamma/2}v, H^{ \gamma/2}H^{\alpha+\beta} w) = g_{H^\gamma}(v,H^{\alpha+\beta}w).
    \end{align*}
    The norm equality follows immediately from the definition
    \[
    \|H^\alpha x \|_{H^\gamma}^2 = g_{H^\gamma}(H^\alpha x ,H^\alpha x) =  g(H^{\alpha+\gamma/2} x ,H^{\alpha+\gamma/2} x) = \|H^{\alpha+\gamma/2} x \|_g^2 =\|x \|_{H^{2 \alpha + \gamma}}^2.
     \]
\end{proof}

\begin{lemma}(Young inequality for dual $H$ norms)
\label[lemma]{lemma:young_inequality}
    Let $H$, $V$ be as in \Cref{def:operator_induced_norm} and let $ u, v \in V $, $\zeta \ne 0$, and $\alpha \in \R$. Then, we have
    \[
        g(u,v) \leq \frac{1}{2\zeta^2} \|u \|^2_{H^{-\alpha}} + \frac{\zeta^2}{2}\|v \|_{H^{\alpha}}^2.
    \]
\end{lemma}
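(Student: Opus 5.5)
The statement to prove is \Cref{lemma:young_inequality}. The plan is to reduce it to the scalar Young inequality $ab \le \tfrac{1}{2\zeta^2}a^2 + \tfrac{\zeta^2}{2}b^2$ after inserting complementary powers of $H$ into the inner product. Since $H$ is self-adjoint and positive definite on the finite-dimensional space $(V,g)$, every real power $H^{\beta}$ is defined through the spectral decomposition. These powers are self-adjoint, commute with one another, and satisfy $H^{\beta}H^{-\beta} = \mathrm{Id}$. This is exactly what was used in \Cref{lemma:operator_induced_norm}.

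\textbf{Step 1.} Write $v = H^{-\alpha/2}H^{\alpha/2}v$ and move $H^{-\alpha/2}$ across the inner product using self-adjointness. This gives
\[
g(u,v) = g\bigl(H^{-\alpha/2}u,\, H^{\alpha/2}v\bigr).
\]

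\textbf{Step 2.} Apply Cauchy--Schwarz in $g$ and then the scalar Young inequality with parameter $\zeta^2>0$, which is valid because $\zeta \ne 0$. This yields
\[
g(u,v) \le \|H^{-\alpha/2}u\|_g\,\|H^{\alpha/2}v\|_g \le \frac{1}{2\zeta^2}\|H^{-\alpha/2}u\|_g^2 + \frac{\zeta^2}{2}\|H^{\alpha/2}v\|_g^2 .
\]
An alternative to Cauchy--Schwarz is to expand $0 \le \|\zeta^{-1}H^{-\alpha/2}u - \zeta H^{\alpha/2}v\|_g^2$, which gives the same bound directly.

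\textbf{Step 3.} Identify the two terms with the weighted norms of \Cref{def:operator_induced_norm}. By definition, $\|x\|_{H^{\beta}} = \|(H^{\beta})^{1/2}x\|_g = \|H^{\beta/2}x\|_g$. Taking $\beta=-\alpha$ for $u$ and $\beta=\alpha$ for $v$ gives exactly $\|u\|_{H^{-\alpha}}^2$ and $\|v\|_{H^{\alpha}}^2$, which is the claimed inequality.

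The only point needing care is the last identification. One has to check that $(H^{\beta})^{1/2} = H^{\beta/2}$ for negative exponents, and that $H^{-\alpha}$ is again self-adjoint and positive definite, so that $\|\cdot\|_{H^{-\alpha}}$ is a legitimate norm in the sense of \Cref{def:operator_induced_norm}. Both facts follow immediately from the spectral calculus, since $H$ has strictly positive eigenvalues. No other obstacle is expected.
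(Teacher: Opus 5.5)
Your proof is correct and takes essentially the same approach as the paper. The paper expands $0 \le \|x-y\|_g^2$ with $x=\zeta^{-1}H^{-\alpha/2}u$ and $y=\zeta H^{\alpha/2}v$, which is exactly the alternative you mention in Step 2; your Step 1 also states explicitly the identity $g(u,v)=g(H^{-\alpha/2}u,H^{\alpha/2}v)$, which the paper uses only implicitly.
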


\begin{proof}
    Consider the expansion
    \[
        0\leq \|u-v \|_g^2 = g(u-v,u-v) = \|u \|_g^2 + \| v\|_g^2 - 2 g(u,v),
    \]
    which implies
    \[
        g(u,v) \leq \frac{1}{2}\|u \|_g^2 + \frac{1}{2}\| v\|_g^2.
    \]
    Now, since $H$ is self-adjoint and positive definite with respect to the inner product $g$, we have that $\zeta H^{\alpha}$ is too, and therefore we get
    \[
        g(\zeta^{-1} H^{-\alpha/2}u,\zeta H^{\alpha/2} v) \leq \frac{1}{2}\|\zeta^{-1} H^{-\alpha/2}u \|_g^2 + \frac{1}{2}\| \zeta H^{\alpha/2} v\|_g^2 = \frac{1}{2\zeta^2}\|u \|_{H^{-\alpha}}^2 + \frac{\zeta^2}{2}\| v\|_{H^\alpha}^2.
    \]
\end{proof}

\subsection{Additional Results on Different Numerical Retractions}\label[appendix]{app:compare_retractions}
In this section, we present numerical results comparing several possible retraction choices on the Stiefel manifold. In \Cref{fig:benchmark_retractions}, we compare matrix size against GPU wall-clock time and final error. In particular, we compare QR decomposition, polar decomposition, a direct solver for the Cayley linear system (Cayley-Direct), an iterative method for the Cayley linear system that employs the Sherman--Morrison--Woodbury formula (Cayley-SMW), the fixed-point iteration (Cayley-FP), and the Newton--Schulz iteration.
\begin{figure}[ht]
    \centering
    \includegraphics[width=\linewidth]{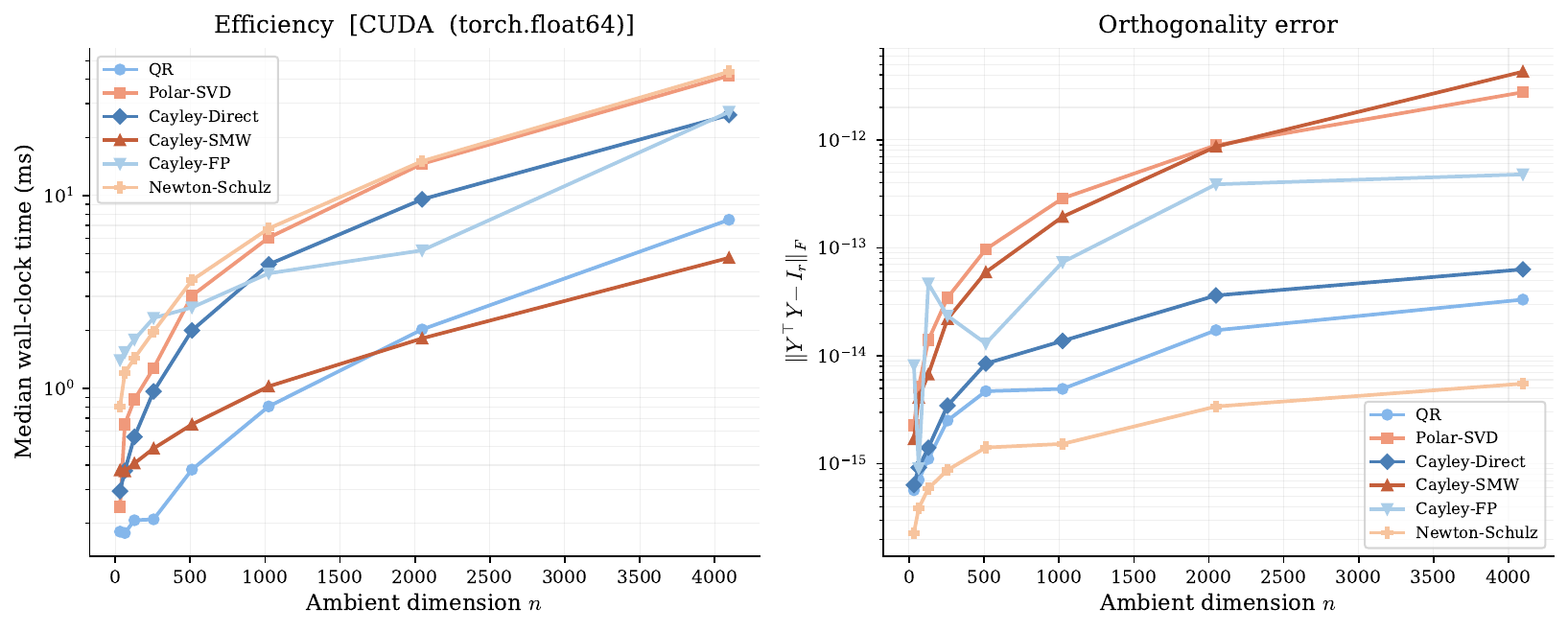}
    \caption{Comparison of different numerical retraction methods.}
    \label{fig:benchmark_retractions}
\end{figure}

\section{Proof of \texorpdfstring{\Cref{prop:bounded_gradient}}{Proposition \ref*{prop:bounded_gradient}}}\label[appendix]{app:proof_bounded_gradient}
Let $(B,A)\in \mathcal F$ and consider
\begin{align*}
    \| \nabla (\cL \circ \Phi)(tB,t^{-1}A)\|^2 &= \| \D\Phi(tB,t^{-1}A)\tr \nabla \cL(W)\|^2 \\
    &= \|tB\tr \nabla \cL(W)\|^2 + \|\nabla \cL(W) A\tr t^{-1} \|^2 \underset{{t \to 0}}{\longrightarrow}+\infty,
\end{align*}
and therefore $\|\nabla (\cL \circ \Phi) \|_{L^\infty(\mathcal F)} = + \infty$. \\
For the second claim, fix $A$ and $B$ such that $\tilde \Phi(B, A) = W$. Then, we have
\[
    \sup_{(B',A') \in  \widetilde{\mathcal F}}\| \nabla (\cL \circ \widetilde \Phi)(B',A')\|^2 = \sup_{O \in \mathrm{St}(r,r)} \| \nabla (\cL \circ \widetilde \Phi)(BO,O\tr A)\|^2 < + \infty,
\]
because of compactness of $\mathrm{St}(r,r)$ and continuity.

\section{Additional Experimental Details}

\subsection{GPT2 E2E Fine-Tuning}
We use hyperparameters tuned as in \citep{ZhangPilanci2024}, as reported in \Cref{tab:GPT_finetuning_hyperparams}. 
\begin{table}[t]
    \centering
    \small
    \begin{tabularx}{\linewidth}{l|*{4}{>{\centering\arraybackslash}X}}
        \toprule
        \textbf{Configuration} & \algname{} & AdamW & Scaled AdamW & GeoLoRA \\
        \midrule
        Learning rate & $8 \times 10^{-3}$ & $8 \times 10^{-3}$ & $8 \times 10^{-3}$ & $8 \times 10^{-3}$ \\
        Weight decay & $10^{-4}$ & $10^{-2}$ & $10^{-2}$ & $10^{-4}$ \\
        Learning rate schedule & Linear & Linear & Linear & Linear  \\
        $(\beta_1,\beta_2)$ & $(0.98,0.98)$ & $(0.9,0.999)$ & $(0.7,0.8)$ & $(0.98,0.98)$ \\
        \bottomrule
    \end{tabularx}
    \caption{Hyperparameters for GPT2 LoRA fine-tuning on E2E.}
    \label{tab:GPT_finetuning_hyperparams}
\end{table}

\end{document}